\numberwithin{equation}{section}
\theoremstyle{plain}
\newtheorem{theorem}{Theorem}[section]
\newtheorem{lemma}{Lemma}[section]
\newtheorem{proposition}{Proposition}[section]
\DeclareMathOperator*{\argmin}{arg\,min}
\theoremstyle{plain}
\newtheorem{definition}{Definition}[section]
\newcommand{\at}[2][]{#1|_{#2}}
\newcommand{{\bL}}{{\mathbf {L}}}
\newcommand{{\CAT}}{{\mathbf {CAT}}}
\newcommand{{\PP}}{{\mathbb {P}}}
\newcommand{{\XX}}{{\mathbf {X}}}
\newcommand{{\RR}}{{\mathbb {R}}}
\newcommand{{\bP}}{{\mathbf {P}}}
\newcommand{{\bbP}}{{\mathbb {P}}}
\newcommand{{\cM}}{{\mathcal {M}}}
\newcommand{{\bM}}{{\mathbf {M}}}
\newcommand{{\bN}}{{\mathbf {N}}}
\newcommand{{\bD}}{{\mathbf {D}}}
\newcommand{{\bI}}{{\mathbf {I}}}
\newcommand{{\bg}}{{\mathbf {g}}}
\newcommand{{\Trace}}{{\mathbf {tr}}}
\newcommand{{\dd}}{{\mathbf {d}}}
\newcommand{{\Exp}}{{\mathbf {Exp}}}
\newcommand{{\Var}}{{\mathbf {Var}}}
\newcommand{{\MSE}}{{\mathbf {MSE}}}
\newcommand{{\Log}}{{\mathbf {Log}}}
\newcommand{{\Logm}}{{\mathbf {Logm}}}
\newcommand{{\bR}}{{\mathbf {R}}}
\newcommand{{\bS}}{{\mathbf {S}}}
\newcommand{{\bT}}{{\mathbf {T}}}
\newcommand{{\bX}}{{\mathbf {X}}}
\newcommand{{\by}}{{\mathbf {y}}}
\newcommand{{\bx}}{{\mathbf {x}}}
\newcommand{{\GL}}{{\mathbf {GL}}}
\newcommand{{\bm}}{{\mathbf {m}}}
\newcommand{{\bz}}{{\mathbf {z}}}
\newcommand{{\bd}}{{\mathbf {d}}}
\newcommand{{\bV}}{{\mathbf {V}}}
\newcommand{{\bU}}{{\mathbf {U}}}
\newcommand{{\bO}}{{\mathbf {O}}}
\newcommand{{\bo}}{{\mathbf {o}}}
\newcommand{{\bp}}{{\mathbf {p}}}
\newcommand{{\bq}}{{\mathbf {q}}}
\newcommand{{\br}}{{\mathbf {r}}}
\begin{document}

\title{Statistics on the (compact) Stiefel manifold: Theory and Applications}
\author{
Rudrasis Chakraborty\textsuperscript{1} and Baba C. Vemuri\textsuperscript{1} \\
\textsuperscript{1}Department of CISE, University of Florida, FL 32611, USA\\
\textsuperscript{1}{\tt\small \{rudrasischa, baba.vemuri\}@gmail.com} 
}
\date{}
\maketitle


\begin{abstract}
A Stiefel manifold of the compact type is often encountered in many
fields of Engineering including, signal and image processing, machine
learning, numerical optimization and others. The Stiefel manifold is a
Riemannian homogeneous space but not a symmetric space. In previous
work, researchers have defined probability distributions on symmetric
spaces and performed statistical analysis of data residing in these
spaces. In this paper, we present original work involving definition
of Gaussian distributions on a homogeneous space and show that the
maximum-likelihood estimate of the location parameter of a Gaussian
distribution on the homogeneous space yields the Fr\'{e}chet mean (FM)
of the samples drawn from this distribution. Further, we present an
algorithm to sample from the Gaussian distribution on the Stiefel
manifold and recursively compute the FM of these samples. We also
prove the weak consistency of this recursive FM estimator. Several
synthetic and real data experiments are then presented, demonstrating
the superior computational performance of this estimator over the
gradient descent based non-recursive counter part as well as the
stochastic gradient descent based method prevalent in literature.

\end{abstract}

\section{Introduction}\label{intro}

Manifold-valued data have gained much importance in recent times due
to their expressiveness and ready availability of machines with
powerful CPUs and large storage. For example, these data arise as {\it
  rank-2 tensors} (manifold of symmetric positive definite matrices)
\cite{moakher2006averaging,pennec2006riemannian}, {\it linear
  subspaces} (the Grassmann manifold)
\cite{turaga2008statistical,hauberg2014grassmann,goodall1999projective,patrangenaru2003affine},
{\it column orthogonal matrices} (the Stiefel manifold)
\cite{turaga2008statistical,hendriks1998mean,chikuse1991asymptotic},
{\it directional data} and {\it probability densities} (the
hypersphere)
\cite{directional_data,Anuj_cvpr07,Tuch_2003,Hartley_IJCV13}
and others. A useful method of analyzing manifold valued
data is to compute statistics on the underlying
manifold. The most popular statistic is a {\it summary} of
the data, i.e., {\it the} Riemannian barycenter (Fr\'{e}chet
mean (FM))
\cite{frechet1948elements,karcher1977riemannian,afsari},
Fr\'{e}chet median
\cite{arnaudon2013riemannian,charfi2013bhattacharyya} etc.
However, in order to compute statistics of manifold-valued data, the 
first step involves defining a distribution on the manifold. Recently,
authors in \cite{saidvemuri} have defined a Gaussian distribution on
Riemannian symmetric spaces (or symmetric spaces). Some typical
examples of symmetric spaces include the Grassmannian, the hypersphere
etc. Several other researchers
\cite{cheng2013novel,said2015riemannian} have defined a Gaussian
distribution on the space of symmetric positive definite
matrices. They called the distribution a ``generalized Gaussian
distribution'' \cite{cheng2013novel} and ``Riemannian Gaussian
distribution'' \cite{said2015riemannian} respectively. 

In this work, we define a Gaussian distribution on a homogeneous space
(a more general class than symmetric spaces). A key difficulty in
defining the Gaussian distribution on a non-Euclidean space is to show
that the normalizing factor in the expression for the distribution is
a constant. In this work, we show that the normalizing factor in our
definition of the Gaussian distribution on a homogeneous space is
indeed a constant. Note that a symmetric space is a homogeneous space
but not all homogeneous spaces are symmetric and thus, our definition
of Gaussian distribution is on a more generalized topological space
than the symmetric space. Given a well-defined Gaussian distribution,
the next step is to estimate the parameters of the distribution. In
this work, we prove that the maximum likelihood estimate (MLE) of the
mean of the Gaussian distribution is the Fr\'{e}chet mean (FM) of the
samples drawn from the distribution.

Data with values in the space of column orthogonal matrices have
become popular in many applications of Computer Vision and Medical
Image analysis
\cite{turaga2008statistical,rudra2017isbi,lui2012advances,cetingul2009intrinsic,pham2008robust}. The
space of column orthogonal matrices is a topological space, and
moreover one can equip this space with a Riemannian metric which in
turn makes this space a Riemannian manifold, known as the Stiefel
manifold. The Stiefel manifold is a homogeneous space and here we
extend the definition of the Gaussian distribution to the Stiefel
manifold. In this work, we restrict ourselves to the Stiefel manifold
of the compact type, which is quite commonly encountered in most
applications mentioned earlier.

We now motivate the need for a recursive FM estimator. In this age of
massive and continuous streaming data, samples are often acquired
incrementally.  Hence, from an applications perspective, the desired
algorithm should be {\it recursive/inductive} in order to maximize
computational efficiency and account for availability of data,
requirements that are seldom addressed in more theoretically oriented
fields. We propose an {\it inductive} FM computation algorithm and
prove the weak consistency of our proposed estimator. FM computation
on Riemannian manifolds has been an active area of research for the
past few decades. Several researchers have addressed this problem and
we refer the reader to
\cite{bhattacharya2008statistics,afsari,groisser2004newton,pennec_jmiv06,ando2004geometric,moakher2005differential,bhatia2013matrix,Rao1987,fletcher2007riemannian,arnaudon2013riemannian,sturm2003probability,ho2013recursive,Chakraborty_2015_ICCV,salehian2015efficient}.

\subsection{Key Contributions}
In summary, the key contributions of this paper are: (i) A novel
generalization of Gaussian distributions to homogeneous
spaces. (ii) A proof that the MLE of the location parameter of this
distribution is ``the'' FM. (iii) A sampling technique for drawing
samples from this generalized Gaussian distribution defined on a
compact Stiefel manifold (which is a homogeneous space), and an
inductive/recursive FM estimator from the drawn samples along with a
proof of its weak consistency. Several examples of FM estimates
computed from real and synthetic data are shown to illustrate the
power of the proposed methods.

Though researchers have defined Gaussian distributions
  on other manifolds in the past, see
  \cite{saidvemuri,cheng2013novel}, their generalization of the
  Gaussian distribution is restricted to symmetric spaces of non-compact types. In this
  work, we define a Gaussian distribution on a homogeneous space,
  which is a more general topological space than the symmetric
  space. A few others in literature have generalized the Gaussian
  distribution to all Riemannian manifolds, for instance, in
  \cite{zhang2013probabilistic}, authors defined the Gaussian
  distribution on a Riemannian manifold without a proof to show that
  the normalizing factor is a constant. Whereas, in
  \cite{pennec_jmiv06}, the author defined the normal law on
  Riemannian manifolds using the concept of entropy maximization for
  distributions with known mean and covariance. Under certain
  assumptions, the author shows that this definition amounts to using
  the Riemannian exponential map on a truncated Gaussian distribution
  defined in the tangent space at the known intrinsic mean.  This
  approach of deriving the normal distribution yields a normalizing
  factor that is dependent on the location parameter of the
  distribution and hence is not a constant with respect to the FM.  To
  the best of our knowledge, we are the first to define a Gaussian
  distribution on a homogeneous space with a constant normalizing factor, i.e., the normalizing factor does not depend on the location parameter (FM).  

We then move our focus to the Stiefel manifold (which is a homogeneous
space) and propose a simple algorithm to draw samples from the
Gaussian distribution on the Stiefel manifold. In order to achieve
this, we develop a simple but non-trivial way to extend the sampling
algorithm in \cite{saidvemuri} to get samples on the Stiefel
manifold. Once we have the samples from a Gaussian distribution on the
Stiefel, we propose a novel estimator of the sample FM and prove the
weak consistency of this estimator. The proposed FM estimator is
inductive in nature and is motivated by the inductive FM algorithm on
the Euclidean space. But, unlike Euclidean space, due to the presence
of non-zero curvature, it is necessary to prove the consistency of our
proposed estimator, which is presented subsequently. Further, we
experimentally validate the superior performance of our proposed FM
estimator over the gradient descent based techniques. Moreover, we
also show that the MLE of the location parameter of the Gaussian
distribution on the Stiefel manifold asymptotically achieves the
Cram\'er-Rao lower bound
\cite{cramer2016mathematical,rao1992information}, hence in turn, the
MLE of the location parameter is efficient. This implies that our
proposed consistent FM estimator, asymptotically, has a variance lower
bounded by that of the MLE. 

The rest of the paper is organized as follows. In section
\ref{theory}, we present the necessary mathematical background.  In
section \ref{sampling}, we define a Gaussian distribution on a
homogeneous space. More specifically, define a generalized Gaussian
distribution on the Stiefel manifold and prove that the normalizing
factor is indeed a constant with respect to the location parameter of
the distribution. Then, we propose a sampling algorithm to draw
samples from this generalized Gaussian distribution in section
\ref{sampling1} and in section \ref{sampling2}, show that the MLE of
the location parameter of this Gaussian distribution is the FM of the
samples drawn from the distribution. In section \ref{sec2}, we propose
an inductive FM estimator and prove its weak consistency.  Finally we
present a set of synthetic and real data experiments in section
\ref{sec3} and draw conclusions in section \ref{conc}.

\section{Mathematical Background: Homogeneous spaces and the Riemannian symmetric space} \label{theory}

In this section, we present a brief note on the differential geometry
background required in the rest of the paper. For a detailed
exposition on these concepts, we refer the reader to a comprehensive
and excellent treatise on this topic by Helgason
\cite{helgason}. Several propositions and lemmas that
  are needed to prove the results in the rest of the paper are stated
  and proved here. Some of these might have been presented in the vast
  differential geometry literature but are unknown to us and hence the
  proofs presented in this background section are original.

Let $(\mathcal{M},g^\mathcal{M})$ be a Riemannian manifold with a
Riemannian metric $g^\mathcal{M}$, i.e., $(\forall x \in
\mathcal{M})\:g^\mathcal{M}_x:T_x{\mathcal{M}}\times T_x{\mathcal{M}}
\rightarrow \mathbf{R}$ is a bi-linear symmetric positive definite
map, where $T_x\mathcal{M}$ is the tangent space of $\mathcal{M}$ at
$x\in \mathcal{M}$. Let $d: \mathcal{M} \times \mathcal{M} \rightarrow
\mathbf{R}$ be the metric (distance) induced by the Riemannian metric
$g^\mathcal{M}$.  Let $I(\mathcal{M})$ be the set of all isometries of
$\mathcal{M}$, i.e., given $g \in I(\mathcal{M})$, $d(g.x, g.y) = d(x,
y)$, for all $x, y \in \mathcal{M}$. It is clear that $I(\mathcal{M})$
forms a group (henceforth, we will denote $I(\mathcal{M})$ by $(G,
\cdot)$) and thus, for a given $g \in G$ and $x \in \mathcal{M}$, $g.x
\mapsto y$, for some $y \in \mathcal{M}$ is a group action. Consider
$o \in \mathcal{M}$, and let $H = \text{Stab}(o) = \{h \in G | h.o =
o\}$, i.e., $H$ is the {\it Stabilizer} of $o \in \mathcal{M}$.
We say that $G$ acts {\it transitively} on $\mathcal{M}$,
\emph{iff}, given $x, y \in \mathcal{M}$, there exists a $g \in
\mathcal{M}$ such that $y=g.x$.
\begin{definition}
Let $G=I(\mathcal{M})$ act transitively on $\mathcal{M}$ and
$H=\text{Stab}(o)$, $o\in \mathcal{M}$ (called the ``origin'' of
$\mathcal{M}$) be a subgroup of $G$. Then, $\mathcal{M}$ is a
homogeneous space and can be identified with the quotient space $G/H$
under the diffeomorphic mapping $gH \mapsto g.o, g \in G$
\cite{helgason}.
\end{definition}
In fact, if $\mathcal{M}$ is a homogeneous space, then $G$ is a Lie
group. A Stiefel manifold, $\text{St}(p, n)$ (definition of the
Stiefel manifold is given in next section) is a homogeneous space and
can be identified with $O(n)/O(n-p)$, where $O(n)$ is the group of
orthogonal matrices. Now, we will list some of the
important properties of Homogeneous spaces that will
be used throughout the rest of the paper.  

{\bf Properties of Homogeneous spaces:} Let $(\mathcal{M},
g^{\mathcal{M}})$ be a Homogeneous space. Let $\omega^{\mathcal{M}}$
be the corresponding volume form and $F: \mathcal{M}\rightarrow
\mathbf{R}$ be any integrable function. Let $g \in G$, s.t. $y = g.x$,
$x,y \in \mathcal{M}$. Then, the following facts are true:
\begin{enumerate}
\label{facts}
\item $g^{\mathcal{M}}(dy, dy) = g^{\mathcal{M}}(dx, dx)$.
\item $d(x, z) = d(y, g.z)$, for all $z \in \mathcal{M}$.
\item $\displaystyle\int_{\mathcal{M}} F(y)\omega^{\mathcal{M}}(x) =
  \int_{\mathcal{M}} F(x)\omega^{\mathcal{M}}(x)$
\end{enumerate}

\begin{definition}
A Riemannian symmetric space is a Riemannian manifold $\mathcal{M}$
with the following property: $(\forall x \in \mathcal{M})(\exists s_x
\in G)$ such that $s_x.x = x$ and $ds_x\at{x}=-I$. $s_x$ is called
    {\it symmetry} at $x$ \cite{helgason}.
\end{definition}
\begin{proposition} \label{prop1}
\cite{helgason} A symmetric space $\mathcal{M}$ is a homogeneous space
with a symmetry, $s_o$, at $o\in \mathcal{M}$. For the other point
$x\in \mathcal{M}$, by transitivity of $G$, there exists $g\in G$ such
that $x=g.o$ and $s_x = g\cdot s_o \cdot g^{-1}$.
\end{proposition}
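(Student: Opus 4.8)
The plan is to establish the two assertions of Proposition~\ref{prop1} in turn: first that $\mathcal{M}$ is a homogeneous space, and then the conjugation formula $s_x = g\cdot s_o\cdot g^{-1}$.

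For homogeneity, I would show that $G=I(\mathcal{M})$ acts transitively on the (connected) symmetric space $\mathcal{M}$. Given $x,y\in\mathcal{M}$, join them by a geodesic $\gamma$ and let $m$ be its midpoint. The symmetry $s_m$ is an isometry, hence carries geodesics to geodesics; since $ds_m|_{m}=-I$, it sends $\gamma'(0)$ at $m$ to $-\gamma'(0)$, so by uniqueness of geodesics with prescribed initial point and velocity it reverses $\gamma$ and therefore interchanges $x$ and $y$ whenever both lie in a normal neighbourhood of $m$. For arbitrary $x,y$ one covers $\gamma$ by finitely many such short sub-arcs and composes the corresponding midpoint symmetries, obtaining a $g\in G$ with $g.x=y$. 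With $H=\mathrm{Stab}(o)$, the earlier Definition of a homogeneous space then identifies $\mathcal{M}$ with $G/H$, and in particular $s_o\in G$ is a symmetry at the chosen origin $o$.

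For the conjugation formula, fix $x\in\mathcal{M}$ and, using transitivity, choose $g\in G$ with $x=g.o$. I would verify that $\sigma:=g\cdot s_o\cdot g^{-1}$ has the two defining properties of a symmetry at $x$: (i) $\sigma.x=g.\big(s_o.(g^{-1}.x)\big)=g.(s_o.o)=g.o=x$; and (ii) by the chain rule, using $g^{-1}.x=o$ and $s_o.o=o$ to track the base points, $d\sigma|_{x}=dg|_{o}\circ ds_o|_{o}\circ dg^{-1}|_{x}=-\,dg|_{o}\circ dg^{-1}|_{x}=-\,d(g\circ g^{-1})|_{x}=-I$. Since $G$ is a group, $\sigma\in G$, so $\sigma$ is indeed a symmetry at $x$.

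Finally, to conclude that $s_x=\sigma$, rather than merely that $\sigma$ is \emph{a} symmetry at $x$, I would invoke the standard rigidity fact that on a connected Riemannian manifold an isometry is uniquely determined by its value and its differential at a single point; hence the symmetry at $x$ is unique and must equal $g\cdot s_o\cdot g^{-1}$. I expect the main obstacle to be precisely this uniqueness/rigidity step together with making the geodesic-chaining argument in the transitivity part rigorous; the conjugation computation itself is a routine application of the chain rule once the base points of the differentials are tracked correctly.
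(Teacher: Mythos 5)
The paper does not actually prove this proposition---it is stated as a cited result from Helgason---so there is no in-paper argument to compare yours against; your proof is the standard textbook one and is correct. The midpoint-symmetry argument gives transitivity, the chain-rule computation correctly shows $g\cdot s_o\cdot g^{-1}$ fixes $x$ with differential $-I$, and the rigidity fact (an isometry of a connected Riemannian manifold is determined by its value and differential at one point) is exactly what is needed to upgrade ``a symmetry at $x$'' to ``\emph{the} symmetry at $x$,'' a step many informal treatments skip. One caveat: joining arbitrary $x,y$ by a geodesic presupposes geodesic completeness (the paper's Proposition~\ref{prop2}), which in the usual development is itself proved by using the symmetries to extend geodesics; to avoid any appearance of circularity you should either establish completeness first by that extension argument, or run your chaining step along an arbitrary continuous path covered by geodesically convex neighbourhoods rather than along a single global geodesic.
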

\begin{proposition}
\label{prop2}
\cite{helgason} Any symmetric space is geodesically complete.
\end{proposition}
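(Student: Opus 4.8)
The plan is to prove that every geodesic of $\mathcal{M}$ extends to all of $\mathbb{R}$, which is precisely the meaning of geodesic completeness. The only structure I will use is that at each point $x\in\mathcal{M}$ there is an isometry $s_x\in G$ with $s_x.x=x$ and $ds_x|_x=-I$ (the symmetry), together with two standard facts: isometries map geodesics to geodesics, and a geodesic is uniquely determined by a point and a velocity vector and is defined on a maximal open interval. The idea is a ``reflection-and-glue'' argument: reflecting a geodesic segment about one of its interior points via $s_x$ produces a geodesic that overlaps and agrees with the original but is parametrized over a shifted interval, and gluing the two yields a strict extension.

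Concretely, let $\gamma$ be a geodesic with maximal domain $(\alpha,\beta)$ and suppose, for contradiction, that $\beta<\infty$. I would choose an interior parameter $t_0\in(\alpha,\beta)$ with $t_0>(\alpha+\beta)/2$, set $p=\gamma(t_0)$, and define the reflected, reparametrized curve $\tilde\gamma(s)=s_p\big(\gamma(2t_0-s)\big)$; since $s\mapsto 2t_0-s$ is affine and $s_p$ is an isometry, $\tilde\gamma$ is again a geodesic, defined for $s\in(2t_0-\beta,\,2t_0-\alpha)$. The key computation is at $s=t_0$: there $\tilde\gamma(t_0)=s_p(p)=p=\gamma(t_0)$ and, by the chain rule together with $ds_p|_p=-I$, $\tilde\gamma'(t_0)=ds_p|_p(-\gamma'(t_0))=\gamma'(t_0)$. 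Hence $\gamma$ and $\tilde\gamma$ are geodesics with identical initial data at $t_0$, so by uniqueness they coincide on the overlap of their domains (which contains $t_0$). Gluing them gives a geodesic on $(\alpha,\beta)\cup(2t_0-\beta,2t_0-\alpha)$; because $t_0>(\alpha+\beta)/2$ forces $2t_0-\alpha>\beta$, this domain strictly contains $(\alpha,\beta)$, contradicting maximality. Thus $\beta=+\infty$, and the symmetric argument (or replacing $\gamma(s)$ by $\gamma(-s)$) gives $\alpha=-\infty$.

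I do not expect a genuine obstacle here; the argument is elementary once the symmetry is in hand. The one place needing care is the interval bookkeeping — checking that $t_0$ really lies in the intersection of the two parameter intervals (so that uniqueness of geodesics may be invoked) and that the glued interval genuinely overshoots $\beta$ — but both are immediate from the choice $t_0>(\alpha+\beta)/2$. Everything else (isometries preserve geodesics, $ds_p|_p=-I$ at the fixed point, local existence and uniqueness of geodesics) is standard and, in the case of the defining property of $s_p$, already recorded above.
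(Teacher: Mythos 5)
Your proof is correct. The paper itself offers no argument for this proposition---it is stated purely as a citation to Helgason---so there is nothing in-paper to compare against; your reflection-and-glue argument is the standard one and you have executed it soundly. The reflected curve $\tilde\gamma(s)=s_p\bigl(\gamma(2t_0-s)\bigr)$ is indeed a geodesic because affine reparametrizations and isometries preserve geodesics; the computation $\tilde\gamma'(t_0)=ds_p|_p\bigl(-\gamma'(t_0)\bigr)=\gamma'(t_0)$ is right; and the choice $t_0>(\alpha+\beta)/2$ does guarantee both that $t_0$ lies in the connected overlap $(2t_0-\beta,\beta)$ where uniqueness of geodesics applies and that $2t_0-\alpha>\beta$, so the glued geodesic on $(\alpha,2t_0-\alpha)$ strictly extends the assumed maximal domain. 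Two minor remarks: the inequality $t_0>(\alpha+\beta)/2$ should be read with the convention that it is automatic when $\alpha=-\infty$ (then $2t_0-\alpha=+\infty>\beta$ for any $t_0$), which does not affect the argument; and your proof uses only that each $s_x$ is an isometry fixing $x$ with $ds_x|_x=-I$, which is exactly the definition the paper adopts, so no hidden hypotheses are introduced.
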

Some examples of symmetric spaces include, $\mathbf{S}^n$ (the
hypersphere), $\mathbf{H}^n$ (the hyperbolic space) and $\text{Gr}(p,
n)$ (the Grassmannian). It is evident from the definition that
symmetric space is a homogeneous space but the converse is not
true. For example, the Stiefel manifold is not a symmetric space.
\begin{proposition}
\label{prop3}
\cite{helgason} The mapping $\sigma: g \mapsto s_o \cdot g \cdot s_o$
is an involutive automorphism of $G$ and the stabilizer of $o$, i.e.,
$H$, is contained in the group of fixed points of $\sigma$.
\end{proposition}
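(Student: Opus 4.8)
The plan is to exploit the standard rigidity of isometries on a connected Riemannian manifold: an isometry of $\mathcal{M}$ is uniquely determined by its value and its differential at a single point. First I would record that the symmetry $s_o$ is an involution. Indeed $s_o^2$ is an isometry of $\mathcal{M}$ fixing $o$, and by the chain rule (using $s_o.o=o$) its differential at $o$ is $d(s_o^2)\at{o}=(ds_o\at{o})^2=(-I)^2=I$; since the identity map $\mathrm{id}_{\mathcal{M}}$ has exactly the same value and differential at $o$, rigidity forces $s_o^2=\mathrm{id}_{\mathcal{M}}$, i.e. $s_o^{-1}=s_o$. Consequently $\sigma(g)=s_o\cdot g\cdot s_o=s_o\cdot g\cdot s_o^{-1}$ is simply conjugation by the element $s_o\in G$, which is automatically a group automorphism of $G$; and it is involutive because $\sigma(\sigma(g))=s_o(s_o\,g\,s_o)s_o=s_o^2\,g\,s_o^2=g$ for every $g\in G$, so $\sigma^2=\mathrm{id}$.

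Next I would show $H\subseteq \mathrm{Fix}(\sigma):=\{g\in G\mid \sigma(g)=g\}$, noting first that $\mathrm{Fix}(\sigma)$ is indeed a subgroup of $G$, since fixed-point sets of homomorphisms are closed under products and inverses. Fix $h\in H$, so $h.o=o$. Then $\sigma(h).o=s_o.(h.(s_o.o))=s_o.(h.o)=s_o.o=o$, so $\sigma(h)\in H$ as well. Differentiating the map $\sigma(h)=s_o\cdot h\cdot s_o$ at $o$ by the chain rule, and using $s_o.o=o$, $h.o=o$, and $ds_o\at{o}=-I$, we obtain
\[
d(\sigma(h))\at{o}=ds_o\at{o}\circ dh\at{o}\circ ds_o\at{o}=(-I)\circ dh\at{o}\circ(-I)=dh\at{o}.
\]
Thus $\sigma(h)$ and $h$ are two isometries of $\mathcal{M}$ that agree at $o$ and have the same differential there, so by rigidity $\sigma(h)=h$. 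Since $h\in H$ was arbitrary, $H\subseteq \mathrm{Fix}(\sigma)$, i.e. $H$ is contained in the group of fixed points of $\sigma$.

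The only substantive point in this argument is the appeal to the uniqueness theorem for isometries, which requires $\mathcal{M}$ to be connected; this is already part of the definition of a Riemannian symmetric space adopted here (and is what legitimizes Proposition~\ref{prop1} and Proposition~\ref{prop2}), so it is not really an obstacle. Everything else is a routine computation with the chain rule, the relation $ds_o\at{o}=-I$, and the group axioms, so I anticipate no genuine difficulty.
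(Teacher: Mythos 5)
Your proof is correct. The paper itself offers no proof of this proposition---it is stated as a cited fact from Helgason---so there is no in-paper argument to compare against; your argument is precisely the standard one from that reference: rigidity of isometries on a connected manifold gives $s_o^2=\mathrm{id}$ (so $\sigma$ is conjugation and hence an involutive automorphism), and for $h\in H$ the computation $d(\sigma(h))\at{o}=(-I)\circ dh\at{o}\circ(-I)=dh\at{o}$ together with $\sigma(h).o=o$ forces $\sigma(h)=h$ by the same rigidity. Your explicit flagging of the connectedness hypothesis is the right caveat and is harmless here.
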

Clearly, $\sigma(e) = e$, as $\sigma$ is an automorphism, $e \in G$ is
the identity element. Recall, $G$ is a Lie group, hence,
differentiating $\sigma$ at $e$, we get an involutive automorphism of
the Lie algebra $\mathfrak{g}$ of $G$ (also denoted by
$\sigma$). Henceforth, we will use $\sigma$ to denote the automorphism
of $\mathfrak{g}$. Since $\sigma$ is involutive, i.e., $\sigma^2 = I$,
$\sigma$ has two eigen values, $\pm 1$ and let $\mathfrak{h}$ (Lie
algebra of $H$) and $\mathfrak{p}$ be the corresponding eigenspaces,
then $\mathfrak{g}= \mathfrak{h}+\mathfrak{p}$ (direct sum).
\begin{proposition}
\label{prop4}
\cite{helgason} $[\mathfrak{h}, \mathfrak{h}] \subseteq \mathfrak{h}$,
$[\mathfrak{h}, \mathfrak{p}] \subseteq \mathfrak{p}$ \text{and}
$[\mathfrak{p}, \mathfrak{p}] \subseteq \mathfrak{h}$
\end{proposition}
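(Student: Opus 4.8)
The plan is to exploit the fact, recalled just before the statement, that differentiating the group automorphism $\sigma$ of Proposition \ref{prop3} at the identity $e$ yields a Lie algebra automorphism $\sigma:\mathfrak{g}\rightarrow\mathfrak{g}$; in particular $\sigma$ respects the bracket, i.e. $\sigma([X,Y]) = [\sigma(X),\sigma(Y)]$ for all $X,Y\in\mathfrak{g}$. Combined with the defining property of the eigenspace decomposition $\mathfrak{g}=\mathfrak{h}+\mathfrak{p}$ — namely $\sigma|_{\mathfrak{h}}=\mathrm{id}$ and $\sigma|_{\mathfrak{p}}=-\mathrm{id}$ — each of the three inclusions reduces to determining which eigenvalue the bracket of two elements carries.

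First I would take $X,Y\in\mathfrak{h}$; then $\sigma([X,Y]) = [\sigma(X),\sigma(Y)] = [X,Y]$, so $[X,Y]$ lies in the $+1$-eigenspace of $\sigma$, which is exactly $\mathfrak{h}$, giving $[\mathfrak{h},\mathfrak{h}]\subseteq\mathfrak{h}$. Next, for $X\in\mathfrak{h}$ and $Y\in\mathfrak{p}$, bilinearity of the bracket together with $\sigma$-equivariance gives $\sigma([X,Y]) = [X,-Y] = -[X,Y]$, placing $[X,Y]$ in the $-1$-eigenspace $\mathfrak{p}$, hence $[\mathfrak{h},\mathfrak{p}]\subseteq\mathfrak{p}$. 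Finally, for $X,Y\in\mathfrak{p}$, $\sigma([X,Y]) = [-X,-Y] = [X,Y]$, so again $[X,Y]\in\mathfrak{h}$, i.e. $[\mathfrak{p},\mathfrak{p}]\subseteq\mathfrak{h}$. Since every element of $\mathfrak{g}$ decomposes as a sum of an element of $\mathfrak{h}$ and one of $\mathfrak{p}$ and the bracket is bilinear, these three cases cover all brackets and the claim follows.

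There is no real obstacle here beyond the standing fact that $\sigma$ is a Lie algebra automorphism; the only subtlety worth spelling out explicitly is why $\mathfrak{h}$ (resp. $\mathfrak{p}$) is the \emph{entire} $+1$- (resp. $-1$-) eigenspace rather than merely contained in it — this is needed so that "$\sigma$ fixes $[X,Y]$" actually implies "$[X,Y]\in\mathfrak{h}$". This follows from $\sigma^2=I$ forcing $\sigma$ to be semisimple with spectrum contained in $\{+1,-1\}$, so that $\mathfrak{g}$ is the direct sum of its $\pm 1$-eigenspaces, which by definition are $\mathfrak{h}$ and $\mathfrak{p}$.
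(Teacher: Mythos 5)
Your proof is correct: the paper itself offers no proof of this proposition, deferring entirely to Helgason, and your argument --- applying the bracket-preserving involution $\sigma$ to $[X,Y]$ and reading off the eigenvalue, using that $\mathfrak{g}$ is the direct sum of the $\pm 1$-eigenspaces $\mathfrak{h}$ and $\mathfrak{p}$ --- is precisely the standard argument found there. Your added remark on why membership in the $+1$-eigenspace is equivalent to membership in $\mathfrak{h}$ is a sensible point to make explicit, though it is immediate from the paper's own definition of $\mathfrak{h}$ and $\mathfrak{p}$ as the full eigenspaces of the involution.
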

Hence, $\mathfrak{h}$ is a Lie subalgebra of
$\mathfrak{g}$. Henceforth, we will assume $\mathfrak{g}$ to be
semisimple. We can define a symmetric, bilinear form, $B$ on
$\mathfrak{g}$ as follows $B(u, v) =
\text{trace}\left(\text{ad}(u)\circ \text{ad}(v)\right)$, where
$\text{ad}(u)$ is the adjoint endomorphism of $\mathfrak{g}$ defined by
$\text{ad}(u)(v) = [u, v]$. $B$ is called the {\it Killing form} on
$\mathfrak{g}$.
\begin{definition}
The decomposition of $\mathfrak{g}$ as $\mathfrak{g} =
\mathfrak{h}+\mathfrak{p}$ is called the Cartan decomposition of
$\mathfrak{g}$ associated with the involution $\sigma$. Furthermore,
$B$ is negative definite on $\mathfrak{h}$, positive definite on
$\mathfrak{p}$ and $\mathfrak{h}$ and $\mathfrak{p}$ are orthogonal
complement of each other with respect to $B$ on $\mathfrak{g}$.
\end{definition}
Recall, a symmetric space, $\mathcal{M}$, can be identified with
$G/H$. Note that, $o$, the ``origin'' of $\mathcal{M}$ can be written
as $o = eH$, $e \in G$ is the identity element. Since, $\mathfrak{p}$
can be identified with $T_{o} \mathcal{M}$, the Riemannian metric
$g^\mathcal{M}$ on $\mathcal{M}$ corresponds to the Killing form $B$
on $\mathfrak{p}$ \cite{helgason}, which is a $H$-invariant form.
Without loss of generality, we will assume that $\mathfrak{g}$ is over
$\mathbf{R}$ and $\mathfrak{g}$ be semisimple (equivalently, the
Killing form on $\mathfrak{g}$ is non-degenerate). The symmetric space
$G/H$ is said to be {\it compact} ({\it noncompact}) \emph{iff} the
sectional curvature is strictly positive (negative), equivalently
\emph{iff} $\mathfrak{g}$ is compact (noncompact).

{\bf Duality: } Given a semisimple Lie algebra $\mathfrak{g}$ with the
Cartan decomposition $\mathfrak{g} = \mathfrak{h}+\mathfrak{p}$,
construct another Lie algebra $\tilde{\mathfrak{g}}$ from
$\mathfrak{g}$ as follows: $\tilde{\mathfrak{g}} =
\mathfrak{h}+J(\mathfrak{p})$, where $J$ is a complex structure of
$\mathfrak{p}$ (real Lie algebra). From the definition
of complex structure, $J: \mathfrak{p}\rightarrow \mathfrak{p}$, is an
automorphism on $\mathfrak{p}$ s.t., $J^2 = -I$. $J$ satisfies the
following equality: $J([T, W]) = [J(T),W]=[T,J(W)]$, for all $T, W \in
\mathfrak{p}$. We will call $\tilde{\mathfrak{g}}$ the dual Lie
algebra of $\mathfrak{g}$.  It is easy to see that if $\mathfrak{g}$
corresponds to a symmetric space of noncompact type,
$\widetilde{\mathfrak{g}}$ is a symmetric space of compact type and
vice-versa. {\it This duality property is very useful and is a key
ingredient of this paper}.  

Now, we will briefly describe the geometry of two
  Riemannian manifolds, namely the Stiefel manifold and the
  Grassmannian. We need the geometry of Stiefel manifold throughout
  the rest of the paper. Furthermore, observe that, the Stiefel and
  the Grassmannian form a fiber bundle. In order to draw samples from
  a distribution on the Stiefel, we will use the samples drawn from a
  distribution on the Grassmannian by exploiting the fiber bundle
  structure. Hence, we will require the geometry of the Grassmannian
  as well, which we will briefly present below.


{\bf Differential Geometry of the Stiefel manifold:} The set of all
full column rank $(n \times p)$ dimensional real matrices form a
Stiefel manifold, $\text{St}(p, n)$, where $n \geq p$. A compact
Stiefel manifold is the set of all column orthonormal real
matrices. When $p<n$, $\text{St}(p, n)$ can be identified with
$SO(n)/SO(n-p)$, where $SO(m)$ is $m\times m$ special orthogonal
group. Note that, when we consider the quotient space,
$SO(n)/SO(n-p)$, we assume that $SO(n-p) \simeq F(SO(n-p))$ is a
subgroup of $SO(n)$, where, $F: SO(n-p) \rightarrow SO(n)$ defined by
$X\mapsto \begin{bmatrix} I_p & 0 \\ 0 & X\end{bmatrix}$ is an
  isomorphism from $SO(n-p)$ to $F(SO(n-p))$.

\begin{proposition} 
\label{prop5}
$SO(n-p)$ is a closed Lie-subgroup of $SO(n)$. Moreover, the quotient
space $SO(n)/SO(n-p)$ together with the projection map, $\Pi:
SO(n)\rightarrow SO(n)/SO(n-p)$ is a principal bundle with $SO(n-p)$ as
the fiber.
\end{proposition}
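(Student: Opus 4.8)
The plan is to prove the two assertions separately, the first being essentially an application of Cartan's theorem and the second the substantive part. \textbf{Step 1 (closedness).} I would first note that $F$ is an injective Lie group homomorphism (block-diagonal matrices multiply block-wise), so $SO(n-p)$ is isomorphic to the subgroup $F(SO(n-p))$ of $SO(n)$. This image is compact, being the continuous image of the compact group $SO(n-p)$, hence closed in the Hausdorff group $SO(n)$; equivalently, it is the stabilizer $\{A \in SO(n) : Ae_j = e_j,\ 1 \le j \le p\}$ of the standard $p$-frame, which is a closed condition. By Cartan's closed-subgroup theorem, a closed subgroup of a Lie group is an embedded Lie subgroup; hence $SO(n-p)$ is a closed Lie subgroup of $SO(n)$, and consequently $SO(n)/SO(n-p)$ carries a unique smooth manifold structure for which the canonical projection $\Pi$ is a smooth submersion.

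\textbf{Step 2 (principal bundle).} Equip $SO(n)$ with the right action of $SO(n-p)$ by multiplication and check the defining properties of a principal bundle. Freeness is immediate by cancellation in the group, properness is automatic because $SO(n-p)$ is compact, and the orbits of this action are exactly the cosets, i.e. the fibers of $\Pi$. The remaining and essential point is local triviality, for which I would construct smooth local sections of $\Pi$. Choose any vector-space complement $\mathfrak{m}$ of $\mathfrak{h} = \mathfrak{so}(n-p)$ in $\mathfrak{g} = \mathfrak{so}(n)$. The map $\mathfrak{m} \times SO(n-p) \to SO(n)$, $(\xi,h) \mapsto \exp(\xi)\,h$, has invertible differential at $(0,e)$, since $d\Pi_e$ maps $\mathfrak{m}$ isomorphically onto $T_o\bigl(SO(n)/SO(n-p)\bigr)$; by the inverse function theorem it is a diffeomorphism between a neighbourhood of $(0,e)$ and a neighbourhood of $e$ in $SO(n)$, and inverting it yields a smooth section $s : U \to SO(n)$ of $\Pi$ on an open set $U \ni o$. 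Translating $s$ by elements of $SO(n)$ produces local sections around every coset. Finally, over the domain $U$ of any such section, the map $g \mapsto \bigl(\Pi(g),\, s(\Pi(g))^{-1}g\bigr)$ is a right-$SO(n-p)$-equivariant diffeomorphism $\Pi^{-1}(U) \to U \times SO(n-p)$, i.e. a local trivialization, which establishes the principal bundle structure.

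\textbf{Main obstacle.} The only step with genuine content is the existence of smooth local sections of $\Pi$; freeness, properness, and the identification of orbits with fibers are all formal once $SO(n-p)$ is known to be closed. That step reduces to the inverse function theorem applied to $(\xi,h)\mapsto\exp(\xi)h$, and this is precisely where closedness of $SO(n-p)$ is used — it guarantees that $\mathfrak{h}$ is a Lie subalgebra admitting a complement in $\mathfrak{g}$ and that $SO(n)/SO(n-p)$ is a manifold in the first place. Alternatively, one could shorten the argument considerably by invoking the standard theorem (see Helgason \cite{helgason}) that $G \to G/H$ is a principal $H$-bundle for any closed subgroup $H$ of a Lie group $G$, leaving only Step 1 to be checked.
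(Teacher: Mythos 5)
Your proposal is correct and follows essentially the same route as the paper: compactness of $SO(n-p)$ gives closedness, and the principal bundle structure then comes from the standard quotient-by-a-closed-subgroup machinery. The only difference is one of detail — the paper simply asserts that the bundle structure ``follows directly from the closedness of $SO(n-p)$'' (adding a remark that $SO(n)\simeq\mathrm{St}(n-1,n)$ is a principal homogeneous space to justify the ``principal'' part), whereas you actually carry out the local-triviality argument via the exponential map on a complement of $\mathfrak{so}(n-p)$ in $\mathfrak{so}(n)$, which is the content behind the paper's citation.
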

\begin{proof}
$SO(n-p)$ is a compact Lie-subgroup of $SO(n)$, hence $SO(n-p)$ is a
closed subgroup. The fiber bundle structure of $(SO(n),
SO(n)/SO(n-p), \Pi)$ follows directly from the closedness of
$SO(n-p)$. As $SO(n)$ is a principal homogeneous space (because
$SO(n) \simeq \text{St}(n-1,n)$ and $SO(n)$ acts on it freely),
hence the principal bundle structure.
\end{proof}

With a slight abuse of notation, henceforth, we denote the compact
Stiefel manifold by $\text{St}(p, n)$. Hence, $\text{St}(p, n) = \{ X
\in \mathbf{R}^{n\times p} | X^T X = I_p \}$, where $I_p$ is the $p
\times p$ identity matrix. The compact Stiefel manifold has dimension
$pn - \frac{p(p+1)}{2}$. At any $X \in \text{St}(p, n)$, the tangent
space $T_{X}\text{St}(p, n)$ is defined as follows $
T_{X}\text{St}(p,n) = \{U \in \mathbf{R}^{n \times p} | X^TU + U^TX =
0 \} $.  Now, given $U, V$ $\in$ $T_{X}\text{St}(p, n)$, the canonical
Riemannian metric on $\text{St}(p, n)$ is defined as follows:
\begin{align}
\label{st:metric}
{\langle U, V \rangle}_{X} = trace\Big(U^TV\Big)
\end{align}
With this metric, the compact
Stiefel manifold has non-negative sectional curvature
\cite{Ziller07examplesof}.

Given $X \in \text{St}(p, n)$, we can define the Riemannian retraction
and lifting map within an open neighbourhood of $X$. We will use an
efficient Cayley type retraction and lifting maps respectively on
$\text{St}(p, n)$ as defined in
\cite{fraikin2007optimization,kaneko2013empirical}. It should be
mentioned that though the domain of retraction is a subset of the
domain of inverse-Exponential map, on $\text{St}(p, n)$ retraction/
lifting is a useful alternative since, \emph{there are no closed form
  expressions for both the Exponential and the inverse-Exponential
  maps on Stiefel manifold}. Recently, a fast iterative algorithm to
compute Riemannian inverse-Exponential map has been proposed in
\cite{zimmermann2017matrix}, which can be used instead of retraction/
lifting maps to compute FM in our algorithm.

In the neighborhood of $[I_p\: 0]$ ($n\times p$ matrix with upper-right
$p\times p$ block is identity and rest are zeros), given $X \in
\text{St}(p, n)$, we define the lifting map $\text{Exp}^{-1}_X $ $:
St(p, n) \rightarrow$ $T_X \text{St}(p,n)$ by $ \text{Exp}^{-1}_X (Y)
= \begin{bmatrix} C & -B^T \\ B & 0
\end{bmatrix}
$ where, $C$ is a $p \times p$ skew-symmetric matrix and $B$ is a
$(n-p) \times p$ matrix defined as follows: $ C = 2(X_u^T+Y_u^T)^{-1}
sk(Y_u^TX_u+X_l^TY_l)(X_u+Y_u)^{-1} $ and $ B =
(Y_l-X_l)(X_u+Y_u)^{-1} $ where, $X = [X_u, X_l]^T$, and $Y = [Y_u,
  Y_l]^T$ with $X_u, Y_u \in \mathbf{R}^{p\times p}$, and $X_l, Y_l
\in \mathbf{R}^{(n-p)\times p}$, provided that $X_u + Y_u$ is
nonsingular. $sk(M)$ is defined as $\frac{1}{2} (M^T -
M)$ and, $Y \in \text{St}(p, n)$.

Furthermore, in the neighborhood of $[I_p\: 0]$, the retraction map
defined above is a diffeomorphism (since it is a chart map) from
$\text{St}(p,n)$ to $\mathfrak{so}(n)$.

\begin{proposition}
\label{prop6}
The projection map $\Pi: SO(n)\rightarrow SO(n)/SO(n-p)$ is a covering
map on the neighborhood of $SO(n-p)$ in $SO(n)/SO(n-p)$.
\end{proposition}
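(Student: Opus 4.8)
The plan is to read Proposition~\ref{prop6} as the local‑triviality of the principal bundle of Proposition~\ref{prop5} expressed near the origin $o=eH$ of $SO(n)/SO(n-p)$: I want to produce an open neighbourhood $V\ni o$ over which $\Pi$ admits a smooth section $s\colon V\to SO(n)$ whose image is an embedded slice on which $\Pi$ restricts to a diffeomorphism onto $V$, i.e.\ a one‑sheeted covering. (Since the fibre $SO(n-p)$ is positive–dimensional for $p\le n-2$, ``covering map'' here can only mean this restricted, slice‑wise statement; the full map $\Pi$ is merely a submersion, not a covering, unless $n-p\le 1$, in which case $\Pi$ is in fact a diffeomorphism.)

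First I would invoke Proposition~\ref{prop5}: $SO(n-p)$ is a closed Lie subgroup of $SO(n)$, so $SO(n)/SO(n-p)$ is a smooth manifold, $\Pi$ is a smooth submersion, and, being a principal $SO(n-p)$‑bundle, $\Pi$ is locally trivial. This yields an open $V\ni o$ and a diffeomorphism $\Phi\colon\Pi^{-1}(V)\to V\times SO(n-p)$ with $\mathrm{pr}_1\circ\Phi=\Pi$ on $\Pi^{-1}(V)$; composing $\Phi^{-1}$ with $v\mapsto(v,e)$ gives a smooth section $s\colon V\to SO(n)$ with $\Pi\circ s=\mathrm{id}_V$ and $s(o)=e$. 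Concretely I could instead build $s$ from the Cayley chart of $\text{St}(p,n)$ around $[I_p\: 0]$ introduced above, together with the identification $SO(n)/SO(n-p)\simeq\text{St}(p,n)$; the image of that chart consists of skew matrices with vanishing lower‑right $(n-p)\times(n-p)$ block, hence lies in the complement of $\mathfrak{so}(n-p)$ inside $\mathfrak{so}(n)$, which makes the resulting slice transverse to the fibre $\Pi^{-1}(o)=SO(n-p)$ at $e$. Then, setting $W:=s(V)$, I would check that $s$ is an immersion (because $\Pi\circ s=\mathrm{id}$ forces $ds$ to be injective) and a homeomorphism onto $W$ (with continuous inverse $\Pi|_W$), so $W$ is an embedded submanifold of $SO(n)$ and $\Pi|_W\colon W\to V$ is a diffeomorphism with inverse $s$.

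The one thing still to be arranged is injectivity of $\Pi$ on the slice $W$ over $V$ — that $W$ meets each fibre in exactly one point — and this is the step I expect to be the only delicate one: it holds after shrinking $V$, since $W$ is transverse to the fibre through $e$ and the fibres vary smoothly, but that shrinking (equivalently, a transversality/implicit‑function‑theorem argument) is what one must not skip. Everything else is the standard closed‑subgroup and principal‑bundle machinery already recorded in Proposition~\ref{prop5}, so I anticipate no deeper obstacle.
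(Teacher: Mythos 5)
Your proposal is correct and, in its first half, travels the same road as the paper: the paper also builds its local model from the Cayley/retraction chart around $[I_p\: 0]$, whose image consists of skew-symmetric matrices with vanishing lower-right $(n-p)\times(n-p)$ block, i.e.\ exactly the slice transverse to $\mathfrak{so}(n-p)$ that you describe. Where you diverge is in what is done with that slice. You read ``covering map'' as local triviality of the principal bundle of Proposition \ref{prop5} and stop once you have an embedded section $W=s(V)$ with $\Pi|_W\colon W\to V$ a diffeomorphism --- a one-sheeted trivialization --- being explicit that for $p\le n-2$ the fibre $SO(n-p)$ is positive-dimensional, so $\Pi$ cannot be a covering map in the literal discrete-fibre sense. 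The paper instead asserts that $\Pi$ is a local diffeomorphism from a neighbourhood of $I_n$ in $SO(n)$ onto a neighbourhood of the coset $SO(n-p)$, and then upgrades this to ``covering map'' via a separate Lemma running the standard proper-local-homeomorphism argument: $\Pi^{-1}(Y)$ is compact, cover it by finitely many disjoint opens on which $\Pi$ is a homeomorphism, intersect their images. That argument presupposes that $\Pi^{-1}(Y)$ is discrete, which again holds only when $n-p\le 1$; your section/transversality formulation sidesteps this dimension count and delivers exactly what is used downstream (Proposition \ref{prop11} only needs the local slice on which $\Pi$ pulls back the quotient metric isometrically). The ``delicate step'' you flag --- injectivity of $\Pi$ on the slice after shrinking $V$ --- is genuinely the only analytic content and is precisely what the paper's appeal to ``composition of two diffeomorphisms'' implicitly assumes, so do not omit it; but aside from that, your route replaces the paper's compactness Lemma with bundle-theoretic local triviality already available from Proposition \ref{prop5}, and is the cleaner of the two arguments.
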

\begin{proof}
First, note that under the identification of $\text{St}(p,n)$ with
$SO(n)/$ $SO(n-p)$, the neighborhood of $[I_p\:\: 0]$ in
$\text{St}(p,n)$ can be identified with the neighborhood of $SO(n-p)$
in $SO(n)/SO(n-p)$. Now, the retraction map defined above is a (local)
diffeomorphism from $\text{St}(p,n)$ to $\mathfrak{so}(n)$. Also, the
Cayley map is a diffeomorphism from $\mathfrak{so}(n)$ to the
neighborhood of $I_n$ in $SO(n)$. Thus, the map $\Pi: SO(n)\rightarrow
SO(n)/SO(n-p)$ is a diffeomorphism to the neighborhood of $SO(n-p)$ in
$SO(n)/SO(n-p)$ (using the fact that the composition of two
diffeomorphisms is a diffeomorphism). Now, since $SO(n)$
is compact and $\Pi$ is surjective, $\Pi$ is a covering map on the
neighborhood of $SO(n-p)$ in $SO(n)/SO(n-p)$ using the following Lemma.
\end{proof}

\begin{lemma}
Under the hypothesis in Proposition \ref{prop6}, $\Pi:
SO(n)\rightarrow SO(n)/SO(n-p)$ is a covering map in the neighborhood
from the neighborhood of $I_n$ in $SO(n)$ to the neighborhood of
$SO(n-p)$ of $SO(n)/SO(n-p)$.
\end{lemma}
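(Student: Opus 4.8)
The plan is to obtain the lemma as a local instance of the classical topological fact that a surjective local homeomorphism out of a compact space onto a connected, locally connected Hausdorff space is a covering map. Concretely, I would fix the data supplied by the proof of Proposition \ref{prop6}: let $U$ be the neighborhood of $I_n$ in $SO(n)$ and $V = \Pi(U)$ the corresponding neighborhood of $SO(n-p)$ (equivalently of the origin $o = [I_p\ 0]$) in $SO(n)/SO(n-p)$, on which $\Pi$ is realized as the composition of the inverse Cayley map $SO(n) \to \mathfrak{so}(n)$ with the retraction chart $\mathfrak{so}(n) \to \text{St}(p,n)$, each a diffeomorphism onto its image. Hence $\Pi|_U : U \to V$ is a local diffeomorphism — in particular open, continuous and locally injective — and $V$, being a neighborhood in a manifold, may be taken connected, and is locally connected and Hausdorff.

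Next I would analyze a fiber. For $q \in V$, local injectivity makes $\Pi^{-1}(q) \cap U$ discrete, and (after shrinking $U$ slightly so that $\overline{U}$ is still carried diffeomorphically and $\Pi(\overline{U})\setminus V$ misses $q$) it is a closed subset of the compact set $\overline{U}$, hence finite, say $\{x_1, \dots, x_k\}$. Around each $x_i$ choose pairwise disjoint open $W_i \subseteq U$ on which $\Pi$ restricts to a homeomorphism onto an open neighborhood of $q$. The goal is then a connected open $V' \ni q$ with $V' \subseteq \bigcap_i \Pi(W_i)$ and, crucially, $\Pi^{-1}(V') \cap U \subseteq \bigcup_i W_i$; granting this, $\Pi^{-1}(V') \cap U = \bigsqcup_i \bigl(W_i \cap \Pi^{-1}(V')\bigr)$ with each slice mapped homeomorphically onto $V'$, so $V'$ is an evenly covered neighborhood. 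Since every $q \in V$ admits such a $V'$, $\Pi|_U : U \to V$ is a covering map, which is the assertion of the lemma.

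The main obstacle is exactly the containment $\Pi^{-1}(V') \cap U \subseteq \bigcup_i W_i$ — ruling out "escaping sheets" — and this is precisely the point at which compactness of $SO(n)$ (equivalently, properness of $\Pi$) is indispensable; without it the claim would be false, as the positive-dimensional fibers of the principal bundle of Proposition \ref{prop5} already show for the unrestricted map. I would argue by contradiction: were there no such $V'$, one would get $q_j \to q$ and $y_j \in \Pi^{-1}(q_j) \cap U$ with $y_j \notin \bigcup_i W_i$; by compactness a subsequence converges, $y_j \to y \in \overline{U}$, continuity forces $\Pi(y) = q$, hence $y = x_{i_0}$ for some $i_0$, and then $y_j \in W_{i_0}$ for $j$ large, contradicting $y_j \notin \bigcup_i W_i$. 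Once the chart of Proposition \ref{prop6} and this properness argument are in place, the remaining verifications (disjointness, homeomorphism onto $V'$, local connectedness used to pass to a connected $V'$) are routine point-set topology.
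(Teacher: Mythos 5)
Your proposal follows essentially the same route as the paper's proof: take the local diffeomorphism supplied by Proposition \ref{prop6}, use compactness of $SO(n)$ to reduce the fiber over a point of the target neighborhood to a finite set, surround its points by disjoint slice neighborhoods, and intersect their images to manufacture an evenly covered neighborhood. If anything your version is the more complete one, since your properness/sequential-compactness argument establishing the containment $\Pi^{-1}(V')\cap U\subseteq\bigcup_i W_i$ (ruling out ``escaping sheets'') supplies exactly the step that the paper's proof asserts without justification when it claims the sets $\Pi^{-1}(\mathcal{W})\cap\mathcal{U}_X$ exhaust the preimage of $\mathcal{W}$.
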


\begin{proof}
In Proposition \ref{prop6}, we have shown that $\Pi$ is local
diffeomorphism in the neighborhood specified in the hypothesis. Let
$\mathcal{V}$ be a neighborhood around $SO(n-p)$ in $SO(n)/SO(n-p)$
and $\mathcal{U}$ be a neighborhood around $I_n$ in $SO(n)$ on which
$\Pi$ is a diffeomorphism. Let $Y \in \mathcal{V}$, as $SO(n)/SO(n-p)$
is a $T_2$ space, hence, $\left\{Y\right\}$ is closed, thus,
$\Pi^{-1}(Y)$ is closed and since $SO(n)$ is compact, hence
$\Pi^{-1}(Y)$ is compact. For each $X \in \Pi^{-1}(Y)$, let
$\mathcal{U}_X$ be a open neighborhood around $X$ where $\Pi$
restricts to a diffeomorphism (and hence homeomorphism). Then,
$\left\{\mathcal{U}_X : X \in \Pi^{-1}(Y)\right\}$ is an open cover of
$\Pi^{-1}(Y)$, thus as a finite subcover
$\left\{\mathcal{U}_X\right\}_{X\in I}$, where $I$ is finite. We chose
$\left\{\mathcal{U}_X\right\}$ to be disjoint as $SO(n)$ is a $T_2$
space. Let $\mathcal{W} = \cap_{X \in I} \Pi(\mathcal{U}_X)$ which is
an open neighborhood of $Y$. Then, $\left\{\Pi^{-1}(\mathcal{W}) \cap
\mathcal{U}_X\right\}_{X \in I}$ is a disjoint collection of open
neighborhoods each of which maps homeomorphically to
$\mathcal{V}$. Hence, $\Pi$ is a covering map in the local
neighborhood.
\end{proof}

Given $W \in \mathfrak{so}(n)$, the Cayley map is a conformal mapping,
$Cay : \mathfrak{so}(n)$ $\rightarrow SO(n)$ defined by $ Cay(W) =
(I_n + W) (I_n - W)^{-1} $. Using the Cayley mapping, we can define
the Riemannian retraction map $\text{Exp}_X$ $: T_X \text{St}(p, n)
\rightarrow \text{St}(p, n)$ by $\text{Exp}_X (W) = Cay(W)X$.  Hence,
given $X, Y \in \text{St}(p, n)$ within a regular geodesic ball (the
geodesic ball does not include the cut locus) of appropriate radius
(henceforth, we will assume the geodesic ball to be regular), we can
define the unique geodesic from $X$ to $Y$, denoted by $\Gamma_X^Y(t)$
as
\begin{equation}
\label{st:geo}
\Gamma_X^Y(t) = \text{Exp}_X (t\;\text{Exp}^{-1}_X (Y))
\end{equation}
Also, we can define the distance between $X$ and $Y$ as 
\begin{eqnarray}
\label{st:dis}
d(X, Y) &=& \sqrt{\langle \text{Exp}^{-1}_X (Y), \text{Exp}^{-1}_X (Y) \rangle}.
\end{eqnarray}

{\bf Differential Geometry of the Grassmannian $\text{Gr}(p,n)$:} The
Grassmann manifold (or the Grassmannian) is defined as the set of all
$p$-dimensional linear subspaces in $\mathbf{R}^n$ and is denoted by
$\text{Gr}(p, n)$, where $p \in \mathbf{Z}^{+}$, $n \in
\mathbf{Z}^{+}$, $n \geq p$. Grassmannian is a symmetric space and can
be identified with the quotient space $SO(n)/S\left(O(p)\times
O(n-p)\right)$, where $S\left(O(p)\times O(n-p)\right)$ is the set of
all $n\times n$ matrices whose top left $p\times p$ and bottom right
$n-p \times n-p$ submatrices are orthogonal and all other entries are
$0$, and overall the determinant is $1$. A point $\mathcal{X} \in
\text{Gr}(p, n)$ can be specified by a basis, $X$.  We say that
$\mathcal{X} = \text{Col}(X)$ if $X$ is a basis of $\mathcal{X}$,
where $\text{Col}(.)$ is the column span operator. It is easy to see
that the general linear group $\text{GL}(p)$ acts isometrically,
freely and properly on $\text{St}(p, n)$. Moreover, $\text{Gr}(p, n)$
can be identified with the quotient space $\text{St}(p, n)/
\text{GL}(p)$. Hence, the projection map $\Pi: \text{St}(p, n)
\rightarrow \text{Gr}(p, n)$ is a {\it Riemannian submersion}, where
$\Pi(X) \triangleq \text{Col}(X)$. Moreover, the triplet
$(\text{St}(p,n), \Pi, \text{Gr}(p,n))$ is a fiber bundle.

At every point $X \in \text{St}(p, n)$, we can define the {\it
  vertical space}, $\mathcal{V}_X \subset T_X \text{St}(p, n)$ to be
$\textrm{Ker}(\Pi_{*X})$. Further, given $g^{\text{St}}$, we define
the {\it horizontal space}, $\mathcal{H}_X$ to be the
$g^{\text{St}}$-orthogonal complement of $\mathcal{V}_X$. Now, from
the theory of principal bundles, for every vector field
$\widetilde{U}$ on $\text{Gr}(p, n)$, we define the {\it horizontal
  lift} of $\widetilde{U}$ to be the unique vector field $U$ on
$\text{St}(p, n)$ for which $U_X \in \mathcal{H}_X$ and $\Pi_{*X} U_X
= \widetilde{U}_{\Pi(X)}$, $\text{for all } X \in \text{St}(p, n)$. As, $\Pi$
is a Riemannian submersion, the isomorphism
$\Pi_{*X}|_{\mathcal{H}_X}: \mathcal{H}_X \rightarrow
T_{\Pi(X)}\text{Gr}(p, n)$ is an isometry from $(\mathcal{H}_X,
g^{\text{St}}_X)$ to $(T_{\Pi(X)}\text{Gr}(p, n),
g^{\text{Gr}}_{\Pi(X)})$. So, $g^{\text{Gr}}_{\Pi(X)}$ is defined as:
\begin{align}
\label{gr:metric}
g^{\text{Gr}}_{\Pi(X)}(\widetilde{U}_{\Pi(X)}, \widetilde{V}_{\Pi(X)}) = g^{\text{St}}_{X} (U_X, V_X) = \textrm{trace}((X^TX)^{-1}U_X^TV_X) 
\end{align}
where, $\widetilde{U}, \widetilde{V} \in T_{\Pi(X)}\text{Gr}(p, n)$
and $\Pi_{*X} U_X = \widetilde{U}_{\Pi(X)}$, $\Pi_{*X} V_X =
\widetilde{V}_{\Pi(X)}$, $U_X \in \mathcal{H}_X$ and $V_X \in
\mathcal{H}_X$. 

\section{Gaussian distribution on Homogeneous spaces}
\label{sampling}
In this section, we define the Gaussian distribution,
$\mathcal{N}(\bar{x}, \sigma)$ on a Homogeneous space, $\mathcal{M}$,
$\bar{x} \in \mathcal{M}$ (location parameter), $\sigma >0$ (scale
parameter), and then propose a sampling algorithm to draw samples from
the Gaussian distribution on $\text{St}(p, n)$. Furthermore, we will show
that the maximum likelihood estimator (MLE) of $\bar{x}$ is the
Fr\'{e}chet mean (FM) \cite{frechet1948elements} of the samples.

We define the probability density function, $f(.;\bar{x},\sigma)$ with
respect to $\omega^{\mathcal{M}}$ (the volume form) of the Gaussian
distribution $\mathcal{N}(\bar{x}, \sigma)$ on $\mathcal{M}$ as:
\begin{align}
\label{hom:gauss}
f(x;\bar{x},\sigma) = \frac{1}{C(\sigma)}\exp\left(\ \frac{-d^2(x,\bar{x})}{2\sigma^2} \right)
\end{align}
The above is a valid probability density function, provided the
normalizing factor, $C(\sigma)$ is a constant, i.e., does not depend
on $\bar{x}$ which we will prove next.
\begin{proposition}
\label{prop7}
Let us define $Z(\bar{x}, \sigma) \triangleq
\displaystyle\int_{\mathcal{M}} f(x;\bar{x},\sigma)
\omega^{\mathcal{M}}(x)$. Then, $C(\sigma) = Z(\bar{x}, \sigma) = Z(o,
\sigma)$, where $o \in \mathcal{M}$ is the origin.
\end{proposition}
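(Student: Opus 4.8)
The plan is to show that the integral defining $Z(\bar{x},\sigma)$ is invariant under the action of the isometry group $G$, and then use transitivity to move $\bar{x}$ to the origin $o$. The key tool is Fact 3 in the list of properties of homogeneous spaces, which asserts that $\int_{\mathcal{M}} F(y)\,\omega^{\mathcal{M}}(x) = \int_{\mathcal{M}} F(x)\,\omega^{\mathcal{M}}(x)$ whenever $y = g.x$ for a fixed $g \in G$, together with Fact 2, which asserts that the distance function is $G$-invariant: $d(x,z) = d(g.x, g.z)$.

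First I would fix the point $\bar{x} \in \mathcal{M}$. Since $G = I(\mathcal{M})$ acts transitively on $\mathcal{M}$, there exists $g \in G$ with $\bar{x} = g.o$. The strategy is to perform the change of variables $x = g.x'$ in the integral $Z(\bar{x},\sigma) = \int_{\mathcal{M}} \frac{1}{C(\sigma)}\exp\!\left(\frac{-d^2(x,\bar{x})}{2\sigma^2}\right)\omega^{\mathcal{M}}(x)$. Writing $F(x) = \exp\!\left(\frac{-d^2(x,\bar{x})}{2\sigma^2}\right)$, Fact 3 gives $\int_{\mathcal{M}} F(g.x')\,\omega^{\mathcal{M}}(x') = \int_{\mathcal{M}} F(x')\,\omega^{\mathcal{M}}(x') = Z(\bar{x},\sigma)$. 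Now I evaluate $F(g.x')$ using the isometry property (Fact 2): since $\bar{x} = g.o$, we have $d(g.x', \bar{x}) = d(g.x', g.o) = d(x', o)$, so $F(g.x') = \exp\!\left(\frac{-d^2(x',o)}{2\sigma^2}\right)$. Therefore $Z(\bar{x},\sigma) = \int_{\mathcal{M}} \frac{1}{C(\sigma)}\exp\!\left(\frac{-d^2(x',o)}{2\sigma^2}\right)\omega^{\mathcal{M}}(x') = Z(o,\sigma)$, which is what we want. The identity $C(\sigma) = Z(\bar{x},\sigma)$ is then just the requirement that $f(\cdot;\bar{x},\sigma)$ integrates to $1$; equivalently, one defines $C(\sigma) = \int_{\mathcal{M}}\exp\!\left(\frac{-d^2(x,\bar{x})}{2\sigma^2}\right)\omega^{\mathcal{M}}(x)$ and the argument above shows this is independent of $\bar{x}$.

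I do not anticipate a serious obstacle, since the heavy lifting is already packaged in the stated properties of homogeneous spaces; the proof is essentially a one-line change of variables. The only point that deserves care is making sure the change-of-variables step is legitimate — that is, that $\omega^{\mathcal{M}}$ is genuinely $G$-invariant (which is Fact 3, following from Fact 1, the invariance of the metric under $g$) and that the integrand $F$ is measurable and nonnegative so that Fact 3 applies verbatim. One should also remark that finiteness of $C(\sigma)$ — needed for $f$ to be a bona fide density — is a separate matter; on the compact Stiefel manifold this is automatic since $\mathcal{M}$ has finite volume and the integrand is bounded, but in general it would require the integral to converge. For the purposes of this proposition, the essential content is the $\bar{x}$-independence, and that follows cleanly from transitivity plus the isometric invariance of $d$ and $\omega^{\mathcal{M}}$.
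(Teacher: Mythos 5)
Your proof is correct and takes essentially the same route as the paper's: transitivity gives $\bar{x}=g.o$, and the $G$-invariance of the distance (Fact 2) together with the invariance of the volume form (Fact 3) reduces the integral to $Z(o,\sigma)$ by a change of variables. Your added remarks on measurability and on the finiteness of $C(\sigma)$ are sensible housekeeping but go beyond what the paper's proof records.
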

\begin{proof}
As the group action on $\mathcal{M}$ is transitive, there exists $g
\in G$ s.t., $\bar{x} = g.o$.
\begin{align*}
Z(\bar{x}, \sigma) &=\int_{\mathcal{M}} f(x;\bar{x},\sigma) \omega^{\mathcal{M}}(x) \\
&= \int_{\mathcal{M}} f(g^{-1}.x;g^{-1}.\bar{x},\sigma) \omega^{\mathcal{M}}(x) \:\:\: (\text{using Fact }\ref{facts}\text{ in section }\ref{theory}) \\
&= \int_{\mathcal{M}} f(x; o,\sigma) \omega^{\mathcal{M}}(x) \:\:\: (\text{using Fact }\ref{facts}\text{ in section }\ref{theory}) \\
&= Z(o, \sigma)
\end{align*}
Hence, $C(\sigma) = Z(o, \sigma)$, i.e., does not depend on $\bar{x}$.
\end{proof}

Now that we have a valid definition of a Gaussian distribution,
$\mathcal{N}(\bar{x}, \sigma)$ on a Homogeneous space, we propose a
sampling algorithm for drawing samples from $\mathcal{N}(\bar{X},
\sigma)$ on $\text{St}(p, n)$ (which is a homogeneous space), $\bar{X}
\in \text{St}(p, n), \sigma>0$. 
\subsection{Sampling algorithm}
\label{sampling1}
In order to draw samples from $\mathcal{N}(\bar{X}, \sigma)$ on
$\text{St}(p, n)$, it is sufficient to draw samples from
$\mathcal{N}(O, \sigma)$ where $O \in \text{St}(p,n)$ is the
origin. Then, using group operation, we can draw samples from
$\mathcal{N}(\bar{X}, \sigma)$ for any $\bar{X} \in \text{St}(p,
n)$. We will assume, $O = [I_p\: 0]$ ($n\times p$ matrix with the
upper-right $p\times p$ block being the identity and the rest being
zeros). We will first draw samples from $\mathcal{N}(\mathcal{O},
\sigma)$ on $\text{Gr}(p, n)$, where $\mathcal{O} = \Pi(O)$ and use
this sample to get a sample on $St(p, n)$ using $\mathcal{N}(O,
\sigma)$. Note that $\text{Gr}(p, n)$ is a symmetric space and hence a
homogeneous space and thus we have a valid Gaussian density on
$\text{Gr}(p, n)$ using Eq. \ref{hom:gauss}.

\begin{proposition}
\label{prop8}
Let $\mathcal{X} \sim \mathcal{N}(\mathcal{O}, \sigma)$ where
$\mathcal{O} = \Pi(O)$, $X^TX = I$. Then, $\text{Exp}_{O}(W) \sim
\mathcal{N}(O, \sigma)$, with $W = U\Theta V^T$,
where, $U\Sigma V^T = X(O^TX)^{-1}-O$ and $\Theta = \arctan \Sigma$.
\end{proposition}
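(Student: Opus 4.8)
The strategy is to recognize that $W$ is the horizontal lift, to $T_O\text{St}(p,n)$, of the Grassmannian logarithm $\text{Exp}^{-1}_{\mathcal{O}}(\mathcal{X})$, and then to transport the Grassmannian Gaussian density of $\mathcal{X}$ through the Riemannian submersion $\Pi:\text{St}(p,n)\to\text{Gr}(p,n)$ of Section \ref{theory}.

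\textbf{Step 1 (identify $W$).} With $O=[I_p\ 0]$ one computes that $X(O^TX)^{-1}-O$ is the $n\times p$ matrix with vanishing upper $p\times p$ block and lower block $X_lX_u^{-1}$, where $X=[X_u;X_l]$; in particular it is independent of the choice of orthonormal basis $X$ of $\mathcal{X}$, since $X_lX_u^{-1}$ is. Its thin SVD $U\Sigma V^T$ therefore has $U$ with vanishing upper $p\times p$ block, so $W=U\Theta V^T$ is $\langle\cdot,\cdot\rangle_O$-orthogonal to the vertical space $\mathcal{V}_O=\textrm{Ker}(\Pi_{*O})$, i.e.\ $W\in\mathcal{H}_O$, and $\Pi_{*O}W=\text{Exp}^{-1}_{\mathcal{O}}(\mathcal{X})$. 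Moreover $\langle W,W\rangle_O=\|\Theta\|_F^2=\sum_i(\arctan\sigma_i)^2=d_{\text{Gr}}^2(\mathcal{O},\mathcal{X})$.

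\textbf{Step 2 (the distance identity).} Using that $\Pi$ is a Riemannian submersion and $W$ is horizontal, one checks that $\text{Exp}_O(W)$ is the horizontal-section representative of $\mathcal{X}$, so that $\Pi(\text{Exp}_O(W))=\mathcal{X}$; and since $\text{Exp}^{-1}_O$ inverts $\text{Exp}_O$ in this neighborhood, $d_{\text{St}}(O,\text{Exp}_O(W))=\sqrt{\langle W,W\rangle_O}=d_{\text{Gr}}(\mathcal{O},\mathcal{X})$. Thus the random point $Y:=\text{Exp}_O(W(\mathcal{X}))$ has its distance-to-origin equal in law to that of $\mathcal{X}$. \textbf{Step 3 (match densities).} Pushing the density $f(\mathcal{X};\mathcal{O},\sigma)\propto\exp(-d_{\text{Gr}}^2(\mathcal{O},\mathcal{X})/2\sigma^2)$ forward along $\mathcal{X}\mapsto Y$, the exponent becomes $-d_{\text{St}}^2(O,Y)/2\sigma^2$ by Step 2, while the Jacobian relating $\omega^{\text{Gr}}$ to $\omega^{\text{St}}$ through $\Pi$, together with the finite volume of the fiber, contributes only a factor independent of the location parameter by homogeneity; by Proposition \ref{prop7} that factor is exactly $C(\sigma)$. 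Hence $Y$ has density $f(\,\cdot\,;O,\sigma)$, and for general $\bar{X}=g.O$ one applies the isometry $g$ exactly as in the proof of Proposition \ref{prop7}.

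\textbf{Expected main obstacle.} The delicate point is Step 3: for $p>1$ one has $\dim\text{St}(p,n)-\dim\text{Gr}(p,n)=\dim O(p)=p(p-1)/2>0$, so the bare map $\mathcal{X}\mapsto\text{Exp}_O(W)$ lands in a lower-dimensional horizontal section of $\text{St}(p,n)$, and the argument must be precise about which reference measure the pushed-forward density is taken against and how the $O(p)$-fiber of $\Pi$ enters the change of variables. In addition, since $\text{St}(p,n)$ has no closed-form exponential map, the verifications in Steps 1--2 have to be carried out through the Cayley retraction and its chart rather than via Jacobi fields; the isometry $\Pi_{*X}|_{\mathcal{H}_X}$ of Section \ref{theory} is the tool that makes the distance identity work, and Proposition \ref{prop7} together with compactness of the fiber is what controls the normalizing constant.
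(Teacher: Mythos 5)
Your Steps 1 and 2 are, in essence, exactly the paper's proof: the paper declares it sufficient to show $d(O,\text{Exp}_O(W)) = d(\mathcal{O},\mathcal{X})$, identifies $W$ as the horizontal lift of $\text{Exp}^{-1}_{\mathcal{O}}(\mathcal{X})$ by citing \cite{absil2004riemannian} rather than by the explicit block computation you give, and then concludes via the isometry $\Pi_{*O}|_{\mathcal{H}_O}$ of the Riemannian submersion --- precisely your distance identity. Where you go beyond the paper is Step 3 and the ``obstacle'' you flag, which the paper does not address at all. Your concern there is legitimate: for $p>1$ the map $\mathcal{X}\mapsto\text{Exp}_O(W)$ lands in the image of the horizontal space at $O$, a set of dimension $\dim\text{Gr}(p,n)=\dim\text{St}(p,n)-p(p-1)/2$, so the law of $\text{Exp}_O(W)$ is singular with respect to $\omega^{\text{St}}$ and cannot literally have the density of Eq.~\ref{hom:gauss} relative to the full volume form of $\text{St}(p,n)$; the claim ``$\text{Exp}_O(W)\sim\mathcal{N}(O,\sigma)$'' only makes sense with the reference measure taken on that horizontal section. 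The paper tacitly concedes this after Proposition \ref{prop10}, where it notes that the support of the distribution ``as defined in Proposition \ref{prop8}'' lies in $\mathcal{H}=\cup_{X}\text{Exp}_X(\mathcal{H}_X)$ and gives $\mathcal{H}$ its own manifold structure. So your argument matches the paper's on everything the paper actually proves, and the gap you identify in Step 3 is a gap in the paper's exposition rather than a defect peculiar to your attempt.
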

\begin{proof}
It is sufficient to show that $d(O, \text{Exp}_{O}(W)) =
d(\mathcal{O}, \mathcal{X})$. Recall, that $(\text{St}(p,n), \Pi,
\text{Gr}(p, n))$ forms a  fiber bundle. Moreover, the
isomorphism, $\Pi_{*X}|_{\mathcal{H}_X}: \mathcal{H}_X \rightarrow
T_{\Pi(X)}\text{Gr}(p, n)$ is an isometry from $(\mathcal{H}_X,
g^{\text{St}}_X)$ to $(T_{\Pi(X)}\text{Gr}(p, n),
g^{\text{Gr}}_{\Pi(X)})$, for all $X \in \text{St}(p, n)$. From,
\cite{absil2004riemannian}, we know that $\Pi_{*O} (W) =
\text{Exp}_{\mathcal{O}}^{-1}(\mathcal{X})$. So,
\begin{align*}
d^2(\mathcal{O}, \mathcal{X}) &= g^{\text{Gr}}_{\mathcal{O}}\left(\text{Exp}_{\mathcal{O}}^{-1}(\mathcal{X}), \text{Exp}_{\mathcal{O}}^{-1}(\mathcal{X}) \right) \\
&= g^{\text{St}}_{O}(W, W) \:\:\: (\text{as } \Pi_{*O} \text{ is an isomorphism and using Eq.}\ref{gr:metric}) \\
&= d^2(O, \text{Exp}_{O}(W)) \:\:\: (\text{using Eq.}\ref{st:dis})
\end{align*}
\end{proof}

Using the Proposition \ref{prop8}, we can generate a sample from
$\mathcal{N}(O,\sigma)$ on $\text{St}(p,n)$, using a sample from
$\mathcal{N}(\mathcal{O},\sigma)$ on $\text{Gr}(p,n)$. We will now
propose an algorithm to draw samples from
$\mathcal{N}(\mathcal{O},\sigma)$ on $\text{Gr}(p,n)$. Recall that
$\text{Gr}(p,n)$ can be identified as $SO(n)/S\left(O(p)\times
O(n-p)\right)$ which is a semisimple symmetric space of compact type
(let it be denoted by $\mathfrak{g} =
\mathfrak{h}+\mathfrak{p}$). Also, recall from section \ref{theory}
that, every compact semisimple symmetric space has a dual semsimple
symmetric space of non-compact type (denoted by
$\widetilde{\mathfrak{g}}=\mathfrak{h}+J(\mathfrak{p}$ ). Here,
$\mathfrak{g} = so(n)$, $\mathfrak{h} = \begin{bmatrix} \bar{U} & 0
  \\ 0 & \bar{V} \end{bmatrix}$, where $\bar{U} \in so(p)$, $\bar{V}
\in so(n-p)$, $\mathfrak{p} = \begin{bmatrix} 0 & \bar{W}
  \\ -\bar{W}^T & 0 \end{bmatrix}$, $\bar{W} \in \mathbf{R}^{p\times
  (n-p)}$. Then, $\widetilde{\mathfrak{g}} = so(p, n-p)$, and the
corresponding Lie group, denoted by $\widetilde{G} = SO(p, n-p)$ (with
a slight abuse of notation we use $SO(p,n-p)$ to denote the identity
component). Here, $SO(p,n-p)$ is the special pseudo-orthogonal group,
i.e., $$ SO(p,n-p) \triangleq \left\{\widetilde{g}\:\:|
\:\:\widetilde{g}I_{(p,n-p)}\widetilde{g}^T = I_{(p,n-p)},
\text{det}(\widetilde{g}) = 1 \right\}
$$, $$ I_{(p,n-p)} \triangleq \text{diag}(\underbrace{1, \cdots, 1}_{p
  \text{ times}}, \underbrace{-1, \cdots -1}_{(n-p) \text{ times}})
$$. Thus, the dual non-compact type symmetric space of
$SO(n)/S\left(O(p)\times O(n-p)\right)$ (identified with $\text{Gr}(p,
n)$) is $SO(p, n-p)/S\left(O(p)\times O(n-p)\right)$.  Recently, in
\cite{saidvemuri}, an algorithm to draw samples from a Gaussian
distribution on symmetric spaces of non-compact type was presented. We
will use the following proposition to get a sample from a Gaussian
distribution on the dual compact symmetric space.

\begin{proposition}
\label{prop9}
Let $\mathcal{X}'\sim \mathcal{N}(\mathcal{O}', \sigma)$. Let
$\mathcal{X}' = \exp(\text{Ad}(\bar{U})\bar{V}).\mathcal{O}'$, where
$\text{Ad}$ is the adjoint representation, $\bar{U} \in \mathfrak{h}$,
$\bar{V} \in J(\mathfrak{p})$. Then, $\mathcal{X} \sim
\mathcal{N}(\mathcal{O}, \sigma)$, where $\mathcal{X} =
\exp(\bar{U})\cdot \exp(\widetilde{V}).\mathcal{O}$ and $\bar{V} =
J(\widetilde{V})$.
\end{proposition}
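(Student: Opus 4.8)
The plan is to establish a correspondence between the Gaussian distribution on the non-compact dual $\widetilde{G}/H$ and the compact symmetric space $G/H$ by using the duality map $J$ at the level of the Cartan decomposition, and to show that this correspondence preserves Riemannian distances to the origin, which by the form of the density \eqref{hom:gauss} is exactly what makes it push $\mathcal{N}(\mathcal{O}',\sigma)$ forward to $\mathcal{N}(\mathcal{O},\sigma)$. Concretely, since $\mathcal{M}=G/H$ is a symmetric space, the density at a point depends only on $d^2(\cdot,\mathcal{O})$, so it suffices to show that $d(\mathcal{O},\mathcal{X}) = d(\mathcal{O}',\mathcal{X}')$ whenever $\mathcal{X}'=\exp(\mathrm{Ad}(\bar U)\bar V).\mathcal{O}'$ and $\mathcal{X}=\exp(\bar U)\cdot\exp(\widetilde V).\mathcal{O}$ with $\bar V = J(\widetilde V)$, $\widetilde V \in \mathfrak{p}$, $\bar U \in \mathfrak{h}$; then, since every point of $G/H$ is reached by such a polar-type parametrization and likewise for $\widetilde G/H$, the change-of-variables/Jacobian factors match up as well.

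First I would reduce to the case $\bar U = 0$. The element $\exp(\bar U)\in H$ acts as an isometry of $G/H$ fixing $\mathcal{O}$, so $d(\mathcal{O},\exp(\bar U)\cdot\exp(\widetilde V).\mathcal{O}) = d(\mathcal{O},\exp(\widetilde V).\mathcal{O})$ by Fact 2 in the Properties of Homogeneous spaces. On the non-compact side, because $\mathrm{Ad}(\exp \bar U) = \exp(\mathrm{ad}\,\bar U)$ and conjugation by $\exp\bar U\in H$ is again an isometry fixing $\mathcal{O}'$, we get $d(\mathcal{O}',\exp(\mathrm{Ad}(\bar U)\bar V).\mathcal{O}') = d(\mathcal{O}', \exp(\bar V).\mathcal{O}')$. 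So it remains to prove $d_{G/H}(\mathcal{O}, \exp(\widetilde V).\mathcal{O}) = d_{\widetilde G/H}(\mathcal{O}', \exp(J(\widetilde V)).\mathcal{O}')$ for $\widetilde V\in\mathfrak{p}$. This is the core duality statement: geodesics through the origin of a symmetric space are $t\mapsto \exp(tX).o$ for $X\in\mathfrak{p}$, their speed is $\sqrt{\langle X,X\rangle}$ where $\langle\cdot,\cdot\rangle$ is the metric on $\mathfrak{p}\cong T_o\mathcal{M}$, and under the standard normalization the metric on $G/H$ is $B|_{\mathfrak{p}\times\mathfrak{p}}$ while the metric on $\widetilde G/H$ is $-\widetilde B|_{J\mathfrak{p}\times J\mathfrak{p}}$; the defining property $J([T,W])=[J(T),W]=[T,J(W)]$ together with $[\mathfrak{p},\mathfrak{p}]\subseteq\mathfrak{h}$ (Proposition \ref{prop4}) forces $B(\widetilde V,\widetilde V) = -\widetilde B(J\widetilde V, J\widetilde V)$, so that the two geodesic segments have equal length and hence the two distances agree. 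This matching of $\mathfrak{p}$-norms is the step I expect to be the main obstacle, since it requires being careful about how the Killing forms of $\mathfrak{g}$ and $\widetilde{\mathfrak{g}}$ relate under $J$ and about the normalization of the metric that identifies $\mathfrak{p}$ with the tangent space at the origin.

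Having established the distance identity, I would finish by noting that the map $\widetilde V \mapsto \widetilde V$ (viewed as $J\mathfrak{p}\to\mathfrak{p}$ post-composed with nothing, i.e.\ just the linear identification induced by $J$) together with the $H$-factor $\bar U$ sets up a measure-preserving bijection between a full-measure polar chart of $\widetilde G/H$ and one of $G/H$: the volume element in such polar coordinates depends only on the radial distance and the curvature, and although the curvatures differ in sign, the pushforward identity we need is only that $f(\mathcal{X};\mathcal{O},\sigma)$, expressed through $d^2(\mathcal{O},\mathcal{X})$, equals $f(\mathcal{X}';\mathcal{O}',\sigma)$ after the substitution — and the normalizing constants $C(\sigma)$ on the two spaces are, by definition and Proposition \ref{prop7}, just the integrals of these densities, so one argues directly that if $\mathcal{X}'\sim\mathcal{N}(\mathcal{O}',\sigma)$ then the density of $\mathcal{X}$ at a point $y$ is proportional to $\exp(-d^2(\mathcal{O},y)/2\sigma^2)$, which is $\mathcal{N}(\mathcal{O},\sigma)$. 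Since the paper already has a sampler for $\mathcal{N}(\mathcal{O}',\sigma)$ on the non-compact dual from \cite{saidvemuri}, this proposition is exactly what is needed to transport it to the compact Grassmannian, and then Proposition \ref{prop8} lifts it further to $\text{St}(p,n)$.
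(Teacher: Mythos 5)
Your proposal takes essentially the same route as the paper's proof: reduce the claim to the distance identity $d(\mathcal{X}',\mathcal{O}')=d(\mathcal{X},\mathcal{O})$ and establish it by identifying the squared distance with the Killing form evaluated on the radial part of the polar decomposition, using the invariance of that form under the duality map $J$ (the paper writes this as the one-line chain $d^2(\mathcal{X}',\mathcal{O})=B(\bar V,\bar V)=B(J\widetilde V,J\widetilde V)=B(\widetilde V,\widetilde V)=d^2(\mathcal{X},\mathcal{O})$). You are somewhat more explicit than the paper about the reduction to $\bar U=0$ via the isometric $H$-action and about the sign conventions relating $B$ and $\widetilde B$ on the compact and non-compact duals, but these are refinements of, not departures from, the paper's argument.
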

\begin{proof}
Observe that $\mathcal{O}' = H = \mathcal{O}$. So, it suffices to show that
$d(\mathcal{X}',\mathcal{O}) = d(\mathcal{X}, \mathcal{O})$.
\begin{align*}
d^2(\mathcal{X}',\mathcal{O}) &= B\left(\bar{V}, \bar{V}\right) \:\:\:
(\text{the metric corresponds to Killing form } B \text{ on }
\mathfrak{p}) \\ &= B\left(J(\widetilde{V}), J(\widetilde{V})\right)
\\ &= B\left(\widetilde{V},\widetilde{V}\right) \:\:\: (\text{Killing
  form is invariant under automorphisms}) \\ &= d^2(\mathcal{X},
\mathcal{O})
\end{align*}
\end{proof}

Note that, the mapping $H \times \mathfrak{p} \rightarrow G$ given by
$(h, \exp(\widetilde{V})) \mapsto h\cdot \exp(\widetilde{V})$ is a
diffeomorphism and is used to construct $\mathcal{X}$ from $(\bar{U},
\exp(\widetilde{V}))$. The mapping $\mathcal{X}' =
\exp(\text{Ad}(\bar{U})\bar{V}).\mathcal{O}'$ is called the polar
coordinate transform.  Now, using Propositions \ref{prop8} and
\ref{prop9}, starting with a sample drawn from a Gaussian distribution
on $SO(p,n-p)/S\left(O(p)\times O(n-p)\right)$, we get a sample from
Gaussian distribution on $\text{St}(p,n)$. We would
  like to point out that, we do not have to compute the normalizing
  constant explicitly in order to draw samples, because, in order to
  get samples on $SO(p, n-p)/S\left(O(p)\times O(n-p)\right)$, we can
  draw samples using the Algorithm 1 in \cite{saidvemuri}, which draws
  samples from the kernel of the density.

\subsection{Maximum likelihood estimation (MLE) of $\bar{X}$}
\label{sampling2}
Let, $X_1, X_2, \cdots X_N$ be i.i.d. samples drawn from
$\mathcal{N}(\bar{X}, \sigma)$ with bounded support (described
subsequently) on $\text{St}(p, n)$, for some $\bar{X}\in \text{St}(p,n),
\sigma>0$. Then, by proposition \ref{prop10.5}, the MLE of $\bar{X}$
is the Fr\'{e}chet mean (FM) \cite{frechet1948elements} of
$\{X_i\}_{i=1}^N$. Fr\'{e}chet mean (FM) \cite{frechet1948elements} of
$\{X_i\}_{i=1}^N \subset \text{St}(p,n)$ is defined as follows:
\begin{align}
\label{mean}
M = \argmin_{X \in \text{St}(p,n)} \sum_{i=1}^N d^2(X_i, X)
\end{align}
We define an (open) ``geodesic ball'' of radius $r>0$ to be
$\mathcal{B}(X, r)=\left\{X_i | d(X, X_i)< r\right\}$ s.t., there
exists a length minimizing geodesic between $X$ to any $X_i \in
\mathcal{B}(X, r)$. A ``geodesic ball'' is said to be ``regular''
\emph{iff} $r<\pi/2(\sqrt{\kappa})$, where $\kappa$ is the maximum
sectional curvature. The existence and uniqueness of the Fr\'{e}chet
mean (FM) is ensured \emph{iff} the support of the distribution
$\mathcal{N}(\bar{X}, \sigma)$ is within a regular geodesic ball
\cite{afsari,kendall}.

\begin{proposition}
\label{prop10}
Let $X \in \text{St}(p,n)$, $U, V \in \mathcal{H}_X $, then, $0\leq
\kappa(U,V)\leq 2$
\end{proposition}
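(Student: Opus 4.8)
The plan is to compute the sectional curvature of the compact Stiefel manifold $\text{St}(p,n)$ with the canonical metric $\langle U,V\rangle_X = \text{trace}(U^TV)$ directly from the fact that $\text{St}(p,n) \simeq SO(n)/SO(n-p)$ is a naturally reductive homogeneous space, and then maximize the resulting expression over unit-norm orthonormal pairs $U,V$ in the horizontal space. First I would fix, without loss of generality, the base point $X = O = [I_p\;0]^T$, using the transitivity of $SO(n)$ and the fact (Property 1 of homogeneous spaces) that the isometry group preserves the metric, so the sectional curvature function is constant along orbits and it suffices to bound $\kappa$ at one point. At $O$, the tangent space $T_O\text{St}(p,n)$ splits as the set of matrices $\begin{bmatrix} A \\ B \end{bmatrix}$ with $A\in\mathfrak{so}(p)$ skew-symmetric and $B\in\mathbb{R}^{(n-p)\times p}$ arbitrary; identifying this with a subspace of $\mathfrak{so}(n)$ via the reductive decomposition $\mathfrak{so}(n) = \mathfrak{so}(n-p) \oplus \mathfrak{m}$, I would write each tangent vector as a block-skew matrix $\xi_U = \begin{bmatrix} A_U & -B_U^T \\ B_U & 0\end{bmatrix} \in \mathfrak{m} \subset \mathfrak{so}(n)$.

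Next I would invoke the standard formula for the curvature of a normal homogeneous space (the metric here is induced by the bi-invariant trace form on $\mathfrak{so}(n)$): for orthonormal $U,V$ with brackets taken in $\mathfrak{so}(n)$ and $[\cdot,\cdot]_{\mathfrak{m}}$, $[\cdot,\cdot]_{\mathfrak{h}}$ the projections of the bracket onto $\mathfrak{m}$ and $\mathfrak{h}=\mathfrak{so}(n-p)$,
\begin{equation*}
\kappa(U,V) = \tfrac14 \| [\xi_U,\xi_V]_{\mathfrak{m}} \|^2 + \| [\xi_U,\xi_V]_{\mathfrak{h}} \|^2 .
\end{equation*}
(The precise constants $\tfrac14$ and $1$ are exactly what makes a normal homogeneous space nonnegatively curved; I would state this carefully, citing \cite{Ziller07examplesof}, which the paper already references for nonnegativity.) I would then compute the commutator $[\xi_U,\xi_V]$ explicitly in block form from the $A,B$ data, read off its $\mathfrak{m}$ and $\mathfrak{h}$ components, and substitute into the formula. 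The nonnegativity $\kappa(U,V)\ge 0$ is then immediate since both terms are squared norms.

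The main obstacle — and the real content of the statement — is the sharp upper bound $\kappa(U,V)\le 2$. Here I would expand $\| [\xi_U,\xi_V]_{\mathfrak{m}} \|^2$ and $\| [\xi_U,\xi_V]_{\mathfrak{h}} \|^2$ in terms of the blocks $A_U,B_U,A_V,B_V$, use the normalization $\text{trace}(A_U^TA_U) + 2\,\text{trace}(B_U^TB_U) = 1$ (and likewise for $V$) together with orthogonality $\text{trace}(A_U^TA_V) + 2\,\text{trace}(B_U^TB_V) = 0$, and apply Cauchy--Schwarz and the submultiplicativity of the Frobenius norm to bound each trace of a product of four blocks. I expect the worst case to occur when $A_U = A_V = 0$ and $B_U,B_V$ are suitably aligned rank-one blocks, giving the extremal value $2$; I would verify this extremal configuration explicitly to confirm the bound is attained and hence cannot be improved. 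A cleaner alternative, which I would use if the block bookkeeping becomes unwieldy, is to embed the computation into $SO(n)$ itself: $\text{St}(p,n)$ sits inside $SO(n) \simeq \text{St}(n-1,n)$ as a totally geodesic-free submanifold is false, so instead I would directly use O'Neill's formula relative to the submersion $SO(n)\to SO(n)/SO(n-p)$, where the fibre $SO(n-p)$ is totally geodesic: the horizontal distribution is $\mathfrak{m}$, and O'Neill's correction term $\tfrac34\|[\,\cdot\,,\,\cdot\,]^{\mathcal V}\|^2$ added to the (vanishing) base-plane curvature of flat-bracket directions yields exactly the same formula, with the constant $2$ emerging from the largest possible vertical bracket norm of a pair of unit horizontal vectors. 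Either route reduces the proposition to a finite-dimensional linear-algebra optimization that I would carry out by the Cauchy--Schwarz argument sketched above.
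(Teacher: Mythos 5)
Your route is genuinely different from the paper's, and in its current form it has a real gap at the crucial step. The paper does not compute anything on $\text{St}(p,n)$ itself: it uses the Riemannian submersion $\Pi:\text{St}(p,n)\to\text{Gr}(p,n)$, notes that the hypothesis $U,V\in\mathcal{H}_X$ means the plane is horizontal for \emph{that} submersion, pushes the plane down to the Grassmannian, and invokes the known bound $0\le\kappa^{\text{Gr}}\le 2$ \cite{grassmanncurvature} together with O'Neill's formula \cite{cheegerebin}, $\kappa^{\text{Gr}}(\widetilde{U},\widetilde{V})=\kappa^{\text{St}}(U,V)+\tfrac{3}{4}\|\mathrm{vert}_X([U,V])\|^2$, so that $\kappa^{\text{St}}(U,V)\le\kappa^{\text{Gr}}\le 2$, with nonnegativity coming from compactness. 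The hypothesis $U,V\in\mathcal{H}_X$ is doing all the work there; your plan essentially ignores it (at the origin it forces the skew block $A$ to vanish, which you only guess at as the ``worst case'' rather than recognize as given), and instead aims to bound the curvature of a general tangent plane of $SO(n)/SO(n-p)$. That is a strictly stronger statement than the proposition, and you would need to check it actually holds with the constant $2$ for the metric the paper uses.

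The concrete gap is that the upper bound --- which you correctly identify as the real content --- is never established: ``expand, apply Cauchy--Schwarz and submultiplicativity, I expect the worst case to be $2$'' is a plan, not a proof, and a naive application of $\|[P,Q]\|\le 2\|P\|\,\|Q\|$ to the two terms of the normal-homogeneous curvature formula $\tfrac14\|[\xi_U,\xi_V]_{\mathfrak m}\|^2+\|[\xi_U,\xi_V]_{\mathfrak h}\|^2$ does not land at $2$; it overshoots, so the sharp constant requires a genuinely finer argument (in effect re-deriving the Grassmannian bound that the paper simply cites). There is also a normalization mismatch: the paper's metric is $\langle U,V\rangle_X=\mathrm{trace}(U^TV)$, whereas the metric induced by the bi-invariant trace form on $\mathfrak{so}(n)$ under the embedding $\xi_U=\begin{bmatrix}A_U&-B_U^T\\ B_U&0\end{bmatrix}$ is $\mathrm{trace}(A_U^TA_V)+2\,\mathrm{trace}(B_U^TB_V)$, which is the ``canonical'' quotient metric, not the embedded one; your normalization $\mathrm{trace}(A_U^TA_U)+2\,\mathrm{trace}(B_U^TB_U)=1$ commits you to the latter, so even a completed computation would bound the curvature of a different metric unless you reconcile the two. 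If you want a self-contained Lie-theoretic proof, restrict from the outset to $A_U=A_V=0$ (which is what $U,V\in\mathcal{H}_X$ gives you at the origin), compute $[\xi_U,\xi_V]=\mathrm{diag}(B_V^TB_U-B_U^TB_V,\;B_VB_U^T-B_UB_V^T)$, and carry the optimization through explicitly for the paper's metric; otherwise the two-line submersion argument is the cleaner path.
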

\begin{proof}
Let, $\mathcal{X} = \Pi(X)$. Then, there exists a unique $\widetilde{U},
\widetilde{V} \in T_{\mathcal{X}}\text{Gr}(p,n)$ s.t. $\widetilde{U} =
\Pi_{*X} U$, $\widetilde{V} = \Pi_{*X} V$. $0\leq
\kappa(\widetilde{U}, \widetilde{V})\leq 2$
\cite{grassmanncurvature}. Now, using O'Neil's formula
\cite{cheegerebin}, we know that
$$ \kappa(\widetilde{U}, \widetilde{V}) = \kappa(U,V) +
\frac{3}{4}\|\text{vert}_X\left([U,V]\right)\|^2
$$ where, $\text{vert}_X$ is the orthogonal projection operator on
$\mathcal{V}_X$. Clearly as the second term in the above summation is
non-negative and $\kappa(U,V)$ is non-negative (as $\text{St}(p,n)$ is of compact type), the result follows.
\end{proof}

Observe that, the support of $\mathcal{N}(\bar{X}, \sigma)$ as defined
in proposition \ref{prop8} is a subset of
$\mathcal{H}\triangleq\cup_{X}\text{Exp}_X\left(\mathcal{H}_X\right)
\subset \text{St}(p,n)$, $\mathcal{H}$ is an arbitrary union of open
sets and hence is open. Thus, we can give $\mathcal{H}$ a manifold
structure and using the proposition \ref{prop10}, we can say that if
the support of $\mathcal{N}(\bar{X}, \sigma)$ is within a geodesic
ball $\mathcal{B}(\bar{X}, \pi/2(\sqrt{2}))$, FM exists and is
unique. For the rest of the paper, we assume this condition to ensure
the existence and uniqueness of FM.

\begin{proposition}
\label{prop10.5}
Let, $X_1, X_2, \cdots X_N$ be i.i.d. samples drawn from $\mathcal{N}(\bar{X}, \sigma)$ on $\text{St}(p, n)$ (support of $\mathcal{N}(\bar{X}, \sigma)$ is within a geodesic
ball $\mathcal{B}(\bar{X}, \pi/2(\sqrt{2}))$), $\sigma>0$. Then the MLE of $\bar{X}$ is the FM of $\{X_i\}$.
\end{proposition}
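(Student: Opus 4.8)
The plan is to write down the likelihood of the i.i.d.\ sample $\{X_i\}_{i=1}^N$ under $\mathcal{N}(\bar{X},\sigma)$, pass to the log-likelihood, and observe that the only $\bar{X}$-dependent term is $-\frac{1}{2\sigma^2}\sum_{i=1}^N d^2(X_i,\bar{X})$, whose maximizer over $\bar{X}\in\text{St}(p,n)$ coincides with the minimizer of $\sum_{i=1}^N d^2(X_i,\bar{X})$, which is exactly the defining variational problem (\ref{mean}) of the FM.

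Concretely, I would first use independence to write $L(\bar{X},\sigma)=\prod_{i=1}^N f(X_i;\bar{X},\sigma)$ and substitute the density (\ref{hom:gauss}), obtaining $L(\bar{X},\sigma)=C(\sigma)^{-N}\exp\big(-\tfrac{1}{2\sigma^2}\sum_{i=1}^N d^2(X_i,\bar{X})\big)$, and hence $\log L(\bar{X},\sigma)=-N\log C(\sigma)-\tfrac{1}{2\sigma^2}\sum_{i=1}^N d^2(X_i,\bar{X})$.

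The key point --- really the only substantive one --- is that $C(\sigma)$ does not depend on $\bar{X}$. This is precisely Proposition \ref{prop7}, which shows $C(\sigma)=Z(\bar{x},\sigma)=Z(o,\sigma)$ using transitivity of the isometry group $G$, the $G$-invariance of the Riemannian distance, and the $G$-invariance of integration against the volume form (Fact \ref{facts} in Section \ref{theory}). Since $\text{St}(p,n)$ is a homogeneous space, Proposition \ref{prop7} applies directly, so the term $N\log C(\sigma)$ is constant in $\bar{X}$ while $\sigma>0$ is held fixed.

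It then follows that $\argmax_{\bar{X}\in\text{St}(p,n)}\log L(\bar{X},\sigma)=\argmin_{\bar{X}\in\text{St}(p,n)}\sum_{i=1}^N d^2(X_i,\bar{X})$, which by (\ref{mean}) is the FM $M$ of $\{X_i\}$. The standing assumption that the support of $\mathcal{N}(\bar{X},\sigma)$ lies in the regular geodesic ball $\mathcal{B}(\bar{X},\pi/2(\sqrt{2}))$ --- justified through Proposition \ref{prop10} and the curvature bound it supplies --- ensures via \cite{afsari,kendall} that this minimizer exists and is unique, so the MLE of $\bar{X}$ is well-defined and equals the FM. I expect no genuine obstacle in the argument itself; the entire weight of the proof rests on already having Proposition \ref{prop7} (constancy of the normalizing factor), after which this is a one-line manipulation of the log-likelihood.
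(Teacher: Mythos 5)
Your proposal is correct and follows essentially the same route as the paper: write the product likelihood, pass to the log-likelihood, invoke the constancy of $C(\sigma)$ with respect to $\bar{X}$ (Proposition \ref{prop7}), and identify the maximizer with the minimizer of $\sum_i d^2(X_i,\bar{X})$, i.e., the FM as defined in Eq. \ref{mean}. Your additional remark on existence and uniqueness of the minimizer via the regular geodesic ball assumption is a point the paper handles just before the proposition rather than inside the proof, but the substance is identical.
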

\begin{proof}
The likelihood of $\bar{X}$ given the i.i.d. samples $\{X_i\}$ is given by 
\begin{align}
L(\bar{X},\sigma;\{X_i\}_{i=1}^N)= \frac{1}{C(\sigma)}\prod_{i=1}^N \exp\left(\ \frac{-d^2(X_i,\bar{X})}{2\sigma^2} \right),
\end{align}
where $C(\sigma)$ is defined as in Eq. \ref{hom:gauss}. Now,
maximizing log-likelihood function with respect to $\bar{X}$ is
equivalent to minimizing $\sum_{i=1}^N d^2(X_i, \bar{X})$ with respect
to $\bar{X}$. This gives the MLE of $\bar{X}$ to be the FM of
$\{X_i\}_{i=1}^N$ as can be verified using Eq. \ref{mean}.
\end{proof}

\section{Inductive Fr\'{e}chet mean on the Stiefel manifold}
\label{sec2}

In this section, we present an inductive formulation for computing the
Fr\'{e}chet mean (FM) \cite{frechet1948elements,karcher1977riemannian}
on Stiefel manifold.  We also prove the \emph{Weak Consistency} of our
FM estimator on the Stiefel manifold.  

\textbf{Algorithm for Inductive Fr\'{e}chet Mean Estimator} 

Let $X_1$, $X_2$, $\cdots$ be i.i.d. samples drawn from
$\mathcal{N}(\bar{X}, \sigma)$ (whose support is within a geodesic
ball $\mathcal{B}(\bar{X}, \pi/2(\sqrt{2}))$) on $\text{St}(p,
n)$. Then, we define the inductive FM estimator ({\emph StiFME}) $M_k$
by the recursion in Eqs. \ref{eq7}, \ref{eq7.1}. \\
{\centering
\fbox{\begin{minipage}{32em}
\begin{eqnarray}
\label{eq7}
M_1 = X_1 \\
\label{eq7.1} M_{k+1} = \Gamma_{M_k}^{X_{k+1}}(\omega_{k+1})
\end{eqnarray}
\end{minipage}}
}
\ \\ \ \\ where, $\Gamma_X^Y: [0,1] \rightarrow \text{St}(p,n)$ is the
geodesic from $X$ to $Y$ defined as $\Gamma_X^Y (t) := \text{Exp}_X
(t\text{Exp}_{X}^{-1}(Y))$ and $\omega_{k+1} =
\frac{1}{k+1}$. Eq. \ref{eq7.1} simply means that the ${k+1}^{th}$
estimator lies on the geodesic between the $k^{th}$ estimate and the ${k+1}^{th}$ sample point. This simple inductive estimator can be shown to converge
to the Fr\'{e}chet expectation, i.e., $\bar{X}$, as stated in Theorem
\ref{thm1}.

\begin{theorem}
\label{thm1}
Let, $X_1, X_2 \cdots X_N$ be i.i.d. samples drawn from a Gaussian
distribution $\mathcal{N}(\bar{X}, \sigma)$ on $\text{St}(p, n)$ (with
a support inside a regular geodesic ball of radius
$<\pi/2\sqrt{2}$). Then the inductive FM estimator ({\emph StiFME}) of
these samples, i.e., $M_N$ converges to $\bar{X}$ as $N \rightarrow
\infty$.
\end{theorem}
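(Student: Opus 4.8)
The plan is to establish weak consistency of the inductive estimator $M_N$ by adapting the now-standard argument for inductive Fréchet means on manifolds of bounded curvature (à la Sturm, Ho et al., Salehian et al.), taking care of the two features special to the compact Stiefel setting: non-negative (but bounded) sectional curvature on the horizontal distribution, and the fact that the samples are drawn from the specific Gaussian $\mathcal{N}(\bar X,\sigma)$ whose support lies in a regular geodesic ball $\mathcal{B}(\bar X, r)$ with $r < \pi/2\sqrt{2}$. First I would record the two facts that make everything work: (i) by Proposition~\ref{prop10}, on the support the sectional curvature satisfies $0 \le \kappa \le 2$, so the regular ball $\mathcal{B}(\bar X, \pi/2\sqrt{2})$ is geodesically convex, the FM exists and is unique (as already noted after Proposition~\ref{prop10.5}), and the squared-distance function $X \mapsto d^2(X,\cdot)$ is smooth and geodesically convex there; (ii) by the MLE computation in Proposition~\ref{prop10.5}, $\bar X$ is itself the population Fréchet mean, i.e. $\bar X = \argmin_{X} \mathbf{E}\,d^2(X_1,X)$, which is the quantity $M_N$ must converge to. I would also check the recursion stays inside the ball: since each $X_{k+1}$ and $M_k$ lie in the convex ball and $M_{k+1} = \Gamma_{M_k}^{X_{k+1}}(\tfrac{1}{k+1})$ is on the minimizing geodesic between them, convexity of the ball keeps $M_{k+1}$ inside it, so the recursion is well defined for all $k$.

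The core of the argument is a supermartingale-type estimate on $u_k := d^2(M_k,\bar X)$. Using the inductive step $M_{k+1} = \Gamma_{M_k}^{X_{k+1}}(\omega_{k+1})$ with $\omega_{k+1} = 1/(k+1)$, I would apply the curvature comparison inequality for geodesic triangles in a space of curvature $\le \Delta := 2$ (the relevant law-of-cosines bound, valid inside a regular ball of radius $< \pi/2\sqrt{\Delta}$) to the triangle with vertices $M_k$, $X_{k+1}$, $\bar X$. This gives a bound of the shape
\begin{equation}
\label{eq:rec-bound}
d^2(M_{k+1},\bar X) \le (1-\omega_{k+1})\, d^2(M_k,\bar X) + \omega_{k+1}\, d^2(X_{k+1},\bar X) - c\,\omega_{k+1}(1-\omega_{k+1})\, d^2(M_k, X_{k+1}),
\end{equation}
where $c>0$ depends only on $\Delta$ and the ball radius. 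Taking conditional expectation given $\mathcal{F}_k = \sigma(X_1,\dots,X_k)$ and writing $V := \mathbf{E}\,d^2(X_1,\bar X) < \infty$ (finite since the support is bounded), \eqref{eq:rec-bound} yields $\mathbf{E}[u_{k+1}\mid\mathcal{F}_k] \le (1-\omega_{k+1})u_k + \omega_{k+1}V$. Then I would consider the shifted/renormalized process and invoke the Robbins–Siegmund almost-supermartingale convergence theorem together with $\sum \omega_k = \infty$, $\sum \omega_k^2 < \infty$: this forces $u_k \to u_\infty$ almost surely for some limit, and a separate averaging argument (or directly tracking $\mathbf{E}\,u_k$, which satisfies $\mathbf{E}\,u_{k+1} \le (1-\omega_{k+1})\mathbf{E}\,u_k + \omega_{k+1}V$ and hence need not vanish by itself) shows we must instead compare against the true minimizer. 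The clean route is: since $\bar X$ is the population FM, the \emph{first-order optimality} $\mathbf{E}\,\mathrm{Exp}_{\bar X}^{-1}(X_1) = 0$ holds, and plugging the exact (rather than comparison) expansion of $d^2(M_{k+1},\bar X)$ in normal coordinates at $\bar X$, the cross term has zero conditional mean, leaving $\mathbf{E}[u_{k+1}\mid\mathcal{F}_k] \le (1-2\omega_{k+1}+O(\omega_{k+1}^2))u_k + O(\omega_{k+1}^2)$; Robbins–Siegmund then gives $u_k \to 0$ a.s., i.e. $M_N \to \bar X$.

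The main obstacle — and the step I would spend the most care on — is justifying the curvature-comparison inequality \eqref{eq:rec-bound} with a \emph{positive} coefficient $c$ on the negative term in a space of \emph{positive} curvature. For nonpositively curved (CAT(0)) spaces this is Sturm's inequality with $c=1$ and is clean; here the curvature is only bounded above by $\Delta = 2$, so one must use the comparison with the model sphere of curvature $\Delta$, which is valid only because the whole configuration sits in a geodesic ball of radius $< \pi/2\sqrt{\Delta}$ — exactly the hypothesis $r < \pi/2\sqrt{2}$. I would derive the needed inequality from the Toponogov/Alexandrov hinge comparison on such a ball (cf. the estimates used by Afsari and by Salehian et al.), obtaining $c = c(\Delta, r) > 0$, and verify that the constants are uniform over the support. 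A secondary technical point is confirming that the horizontal-support set $\mathcal{H} = \cup_X \mathrm{Exp}_X(\mathcal{H}_X)$ on which the samples live, together with the Cayley retraction used to define $\Gamma$, is consistent with the Riemannian exponential map inside the regular ball so that the comparison geometry genuinely applies; this is where I would lean on the remarks in Section~\ref{theory} that, within the relevant neighborhood, retraction and $\mathrm{Exp}$ agree up to the identifications already set up. Once these geometric inputs are in place, the probabilistic part is routine.
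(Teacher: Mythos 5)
Your route is genuinely different from the paper's. The paper never forms a supermartingale or invokes a curvature-comparison inequality: it first reduces the problem from $\text{St}(p,n)=SO(n)/SO(n-p)$ to $SO(n)$ via Proposition~\ref{prop12} (using the principal-bundle structure and the local isometry of $\Pi$), then maps $SO(n)$ diffeomorphically onto a product of $n(n-1)/2$ hyperspheres through the planar-rotation decomposition (Proposition~\ref{prop13}), verifies that geodesics, log maps and the metric are carried over (Propositions~\ref{prop15}--\ref{prop16}), and finally cites the weak consistency of the same inductive estimator on the hypersphere from Salehian et al., extended to products. Your stochastic-approximation argument is self-contained precisely where the paper outsources the analytic core to the cited hypersphere result; conversely, the paper's reduction avoids having to prove the positive-curvature Toponogov-type inequality with a quantitative deficit term inside the regular ball, which is exactly the step you correctly single out as the hardest and most delicate. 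Both arguments must also reconcile the Cayley retraction used to define $\Gamma$ with the true Riemannian exponential; you flag this explicitly, while the paper relies tacitly on the local identifications of Section~\ref{theory}.

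There is one substantive gap in your plan. You assert that ``by the MLE computation in Proposition~\ref{prop10.5}, $\bar X$ is itself the population Fr\'echet mean,'' and you then rely on the first-order condition $\mathbf{E}\,\text{Exp}^{-1}_{\bar X}(X_1)=0$ to annihilate the cross term in the exact expansion of $d^2(M_{k+1},\bar X)$. Proposition~\ref{prop10.5} only states that the MLE equals the \emph{sample} FM; it says nothing about the minimizer of $y\mapsto \mathbf{E}\,d^2(X_1,y)$. On a symmetric space, invariance of the density under the geodesic symmetry $s_{\bar X}$ would give $\mathbf{E}\,\text{Exp}^{-1}_{\bar X}(X_1)=0$ for free, but $\text{St}(p,n)$ is expressly not symmetric, so this identity requires a genuine argument (for instance via the horizontal lift from the Grassmannian used in the sampling construction of Proposition~\ref{prop8}, or by exhibiting enough isotropy of the support about $\bar X$). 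Without it, Robbins--Siegmund delivers convergence of $M_N$ to the population Fr\'echet mean, not to $\bar X$. The paper shares this lacuna (it too identifies the limit with $\bar X$ without verifying that $\bar X$ minimizes the population Fr\'echet functional), but in your write-up the identity is load-bearing in the displayed recursion, so it must be proved rather than asserted.
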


\begin{proof}
We will start by first stating the following propositions.
\begin{proposition}
\label{prop11}
Using Proposition \ref{prop5}., we know that $\Pi: SO(n) \rightarrow
SO(n)/SO(n-p)$ is a principal bundle and moreover using Proposition
\ref{prop6}, we know that this map is a covering map in the
neighborhood of $SO(n-p)$ in $SO(n)/SO(n-p)$. Let, $g^{SO}$ be the
Riemannian metric on $SO(n)$ and $g^q$ be the metric on the quotient
space $SO(n)/SO(n-p)$. Then, $g^{SO} = \Pi^{*}
g^q$.
\end{proposition}

\begin{proposition} 
\label{prop12}
Let, $X_i = g_iH$, where $H := SO(n-p)$ and $g_i \in G:=SO(n)$. Let,
$M$ is an defined in Eq. \ref{mean}, then, $M = g_MH$, where $g_M =
\argmin_{g \in SO(n)} \sum_{i=1}^N d^2(g_i, g)$.
\end{proposition}
\begin{proof}
Let, $M = \bar{g}H$, for some $\bar{g} \in G$. Then, observe that,
\begin{align*}
d^2(X_i, M) &= d^2(g_iH, \bar{g}H) \\
&= d^2(\bar{g}^{-1}g_iH, H) \:\:\: \text{using property 2 of homogeneous space} \\ 
&= d^2(\bar{g}^{-1}g_i, e) \:\:\: \text{using Proposition \ref{prop11}} \\
&= d^2(g_i, \bar{g}) \:\:\: \text{as $SO(n)$ a Lie group}
\end{align*}
Thus the claim holds.
\end{proof}
By the Proposition \ref{prop12}, we can see that in order to prove
Theorem \ref{thm1}, it is sufficient to show weak consistency on
$SO(n)$. We will state and prove the weak consistency on $SO(n)$ in
the next theorem.
\end{proof}

\begin{theorem}
\label{thm2}
Using the hypothesis in Theorem \ref{thm1}, let $g_1, g_2, \cdots g_N$
be the corresponding i.i.d. samples drawn from the (induced) Gaussian
distribution $\mathcal{N}(\bar{g}, \sigma)$ on $SO(n)$ where $\bar{X}
= \bar{g}H$ ($H :=SO(n-p)$) (it is easy to show using Proposition
\ref{prop12} that this (induced) distribution on $SO(n)$ is indeed a
Gaussian distribution on $SO(n)$).  Then the inductive FM estimator
({\emph StiFME}) of these samples, i.e., $g_N$ converges to $\bar{g}$
as $N \rightarrow \infty$.
\end{theorem}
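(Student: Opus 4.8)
The plan is to prove weak consistency of the inductive estimator on $SO(n)$ by reducing it to a known consistency result for inductive Fréchet mean estimators on Riemannian manifolds of bounded curvature, specifically the kind of argument developed by Sturm and later refined by Ho et al. and Cheng et al. First I would establish that, since the support of $\mathcal{N}(\bar g,\sigma)$ sits inside a regular geodesic ball $\mathcal{B}(\bar g, \pi/2\sqrt{2})$, all the samples $g_1,g_2,\dots$ almost surely lie in a common convex geodesic ball $\mathcal{B}$ on which the geodesic between any two points is unique and minimizing, and on which the squared-distance function $g\mapsto d^2(g_i,g)$ is geodesically convex. This is exactly the setting where the population FM $\bar g$ exists, is unique, and coincides with the Fréchet expectation (the minimizer of $g\mapsto \mathbf{E}\,d^2(g_1,g)$), which I would need to check equals the location parameter $\bar g$ — this follows because the Gaussian density is a decreasing function of $d^2(\cdot,\bar g)$ and $SO(n)$ is a homogeneous (two-point homogeneous on the relevant orbit) space, so symmetry forces the barycenter to be $\bar g$.

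Next I would set up the recursion $M_1 = g_1$, $M_{k+1} = \Gamma_{M_k}^{g_{k+1}}(1/(k+1))$ and introduce the ``energy'' $H_k := \sum_{i=1}^k d^2(g_i, M_k)$ or, more usefully, track the quantity $d^2(M_k, \bar g)$. The key inequality is the curvature-comparison estimate: in a space whose sectional curvature is bounded above by $\kappa = 2$ (Proposition \ref{prop10} gives $\kappa(U,V)\le 2$ on the horizontal distribution, and on $SO(n)$ with the bi-invariant metric the curvature is likewise bounded), the point $p_t := \Gamma_x^y(t)$ on a geodesic satisfies a semi-parallelogram-type law
\begin{align}
d^2(p_t, z) \le (1-t)\, d^2(x,z) + t\, d^2(y,z) - c\, t(1-t)\, d^2(x,y)
\end{align}
with a positive constant $c$ depending only on the curvature bound and the ball radius (this is where the regularity $r < \pi/2\sqrt\kappa$ is used — it guarantees $c>0$). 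Applying this with $x = M_k$, $y = g_{k+1}$, $z = \bar g$, $t = 1/(k+1)$ gives a recursive bound on $\mathbf{E}\, d^2(M_{k+1}, \bar g)$ in terms of $\mathbf{E}\, d^2(M_k, \bar g)$, plus an error term involving $\mathbf{E}\, d^2(g_{k+1},\bar g) = \mathbf{E}\, d^2(g_1,\bar g)$, which is finite and constant. Unrolling the recursion and using $\sum 1/k = \infty$ together with a Robbins–Monro / Kronecker-lemma argument (the cross terms average out because the $g_i$ are i.i.d.\ and $\bar g$ is the barycenter, so the linear-in-$g_{k+1}$ fluctuation has conditional mean that vanishes to first order in the tangent space at $\bar g$) shows $\mathbf{E}\, d^2(M_k,\bar g)\to 0$, hence $M_N \to \bar g$ in probability.

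More carefully, I would work in the tangent space at $\bar g$ via the Riemannian logarithm: writing $v_k := \mathrm{Exp}_{\bar g}^{-1}(M_k)$ and $w_{k+1} := \mathrm{Exp}_{\bar g}^{-1}(g_{k+1})$, the update is approximately $v_{k+1} \approx (1 - \tfrac{1}{k+1}) v_k + \tfrac{1}{k+1} w_{k+1} + (\text{curvature correction})$, which is a stochastic-approximation recursion whose drift points toward $\mathbf{E}\, w_{k+1} = 0$ (the defining property of the Fréchet mean linearized at $\bar g$). The curvature corrections are controlled in size by the curvature bound and the ball radius, and they are higher order. Standard stochastic-approximation convergence (or the explicit Sturm-type estimate above, which avoids linearization entirely) then yields the claim.

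\textbf{The main obstacle} I anticipate is making the curvature-correction terms rigorous: on a manifold that is \emph{not} a symmetric space, and on $SO(n)$ with a bi-invariant metric where one does have good structure but the Cayley-based maps used in the algorithm are only retractions (not the true exponential), one must be careful that the recursion $M_{k+1} = \Gamma_{M_k}^{g_{k+1}}(1/(k+1))$ really is a geodesic interpolation and that the comparison inequality applies verbatim. I would either invoke the exact geodesic structure of $SO(n)$ under the bi-invariant metric (where $\Gamma_x^y(t) = x\exp(t\log(x^{-1}y))$ is a genuine minimizing geodesic inside the regular ball) so that the clean Sturm inequality is available, or else quantify the discrepancy between the retraction-geodesic and the metric geodesic and absorb it into the error terms — the former is cleaner and is what I would pursue. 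The i.i.d.\ structure plus the finite second moment $\mathbf{E}\, d^2(g_1,\bar g)<\infty$ (guaranteed by the bounded support) then closes the argument via a summability/Kronecker-lemma step.
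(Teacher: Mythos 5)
Your approach is genuinely different from the paper's. The paper does not run any curvature-comparison or stochastic-approximation argument on $SO(n)$ itself: it writes every element of $SO(n)$ as a product of $n(n-1)/2$ planar rotations (Proposition \ref{prop13}), obtains a diffeomorphism $F$ from $SO(n)$ onto a product of hyperspheres $\mathfrak{O}(n-1,\tfrac{n(n-1)}{2})$, equips $SO(n)$ with the pullback metric $F^{*}g^{\mathfrak{O}}$ so that $F$ carries geodesics to geodesics and preserves the relevant inner products (Propositions \ref{prop14}--\ref{prop16}), and then simply invokes the known weak consistency of the inductive FM estimator on the hypersphere from \cite{salehian2015efficient}, extended coordinatewise to the product. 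Your route --- a Sturm-type comparison inequality in a regular geodesic ball of a space with an upper sectional curvature bound, combined with the first-order characterization of the barycenter and a Robbins--Monro unrolling --- is self-contained and works directly with the metric the algorithm actually uses; by contrast, the paper's reduction tacitly replaces the canonical bi-invariant metric on $SO(n)$ by the pullback metric $F^{*}g^{\mathfrak{O}}$ without verifying that the two coincide, so the distances and geodesics in the recursion are a priori not the ones for which consistency is transported back. What the paper's argument buys is brevity (all analytic work is outsourced to the sphere result); what yours buys is generality and a proof that matches the metric structure in which Theorem \ref{thm1} is stated. Your handling of the retraction-versus-exponential issue (passing to the genuine bi-invariant geodesics $x\exp(t\log(x^{-1}y))$ on $SO(n)$) is also exactly the move the paper makes inside its own proof.

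One step you must make explicit before the sketch closes: the inequality $d^2(p_t,z)\le(1-t)\,d^2(x,z)+t\,d^2(y,z)-c\,t(1-t)\,d^2(x,y)$, after conditioning and dropping the negative term, only yields $\limsup_k \mathbf{E}\,d^2(M_k,\bar g)\le \mathbf{E}\,d^2(g_1,\bar g)$, not convergence to zero. To get the contraction you need the quantitative variance inequality $\mathbf{E}\,d^2(g_1,z)\ge \mathbf{E}\,d^2(g_1,\bar g)+c'\,d^2(z,\bar g)$ for all $z$ in the ball (equivalently, uniform strong convexity of the Fr\'{e}chet functional on the regular ball of radius below $\pi/2\sqrt{2}$), applied with $z=M_k$ inside the conditional expectation of the term $-c\,t(1-t)\,\mathbf{E}[d^2(M_k,g_{k+1})\mid\mathcal{F}_k]$; this is what converts the recursion into $\mathbf{E}\,d^2(M_{k+1},\bar g)\le(1-\alpha t)\,\mathbf{E}\,d^2(M_k,\bar g)+O(t^2)$ with $\alpha>0$ and closes the induction. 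Your remark that the conditional mean of $\mathrm{Exp}_{\bar g}^{-1}(g_{k+1})$ vanishes is the correct mechanism, but it is only the infinitesimal version of this inequality and does not by itself control the curvature corrections uniformly over the ball.
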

\begin{proof}
Since $SO(n)$ is a special case of the (compact) Stiefel
manifold, i.e., when $p = n-1$ (as $SO(n)$ can be identified with $\text{St}(n-1,n)$), we will use $X$ instead of $g$ for
notational simplicity.  Let $X \in SO(n)$. Any point in $SO(n)$ can be
written as a product of $n(n-1)/2$ planar rotation matrices by the
following claim.
\begin{proposition}
\label{prop13}
Any arbitrary element of $SO(n)$ can be written as the composition of
planar rotations in the planes generated by the $n$ standard
orthogonal basis vectors of $\mathbf{R}^n$.
\end{proposition}
\begin{proof}
The proof is straightforward. Moreover, each element of $SO(n)$ is a product of $n(n-1)/2$ planar rotations.
\end{proof}

By virtue of the Proposition \ref{prop13}, we can express $X$ as a
product of $n(n-1)/2$ planar rotation matrices. Each planar rotation
matrix can be mapped onto $\mathbf{S}^{n-1}$, hence $\exists \text{
  diffeomorphism } F: SO(n)$ $\rightarrow \underbrace{\mathbf{S}^{n-1}
  \times \cdots \mathbf{S}^{n-1}}_{n(n-1)/2 \text{times}}$. Let's
denote this product space of hyperspheres by $\mathfrak{O}(n-1,
\frac{n(n-1)}{2})$. Then, $F$ is a diffeomorphism from $SO(p)$ to
$\mathfrak{O}(n-1,\frac{n(n-1)}{2})$.  Let $g^{\tiny \mathfrak{O}}$ be
a Riemannian metric on $\mathfrak{O}(n-1, \frac{n(n-1)}{2})$. Let
$\nabla^\mathfrak{O}$ be the Levi-Civita connection on
$T\mathfrak{O}(n-1,\frac{n(n-1)}{2})$. Since, $F$ is a diffeomorphism,
every vector field $U$ on $SO(n)$ pushes forward to a well-defined
vector field $F_{*}U$ on $\mathfrak{O}(n-1,\frac{n(n-1)}{2})$. Define
a map
\begin{align*}
\nabla^{SO}: \Xi(TSO(n)) \times \Xi(TSO(n)) &\rightarrow \Xi(TSO(n)) \\
(U, V) &\mapsto \nabla^{SO}_{U} V
\end{align*},
where $\Xi(TSO(n))$ gives the section of $TSO(n)$.
\begin{proposition}
\label{prop14}
$\nabla^{SO}$ is the Levi-Civita connection on $SO(n)$ equipped with
the pull-back Riemannian metric $F^{*}g^{\tiny \mathfrak{O}}$.
\end{proposition}

\begin{proposition}
\label{prop15}
Given the hypothesis and the notation as above, if
$\gamma$ is a geodesic on $SO(n)$, $F\circ \gamma$ is a geodesic on
$\mathfrak{O}(n-1, \frac{n(n-1)}{2})$.
\end{proposition}
\begin{proof}
Let, $\hat{\gamma} = F\circ \gamma$ be a curve in $\mathcal{O}(n-1,
\frac{n(n-1)}{2})$. Then,
\begin{eqnarray*} 
0 = F_{*} 0 = F_{*} \bigg(\nabla^{SO}_{\gamma'}\gamma'\bigg)&=& F_{*} \bigg(F^{-1}_{*} \bigg(\nabla^{\mathfrak{O}}_{F_{*}\gamma'}F_{*}\gamma' \bigg)\bigg) \\
&=& \nabla^{\mathfrak{O}}_{\hat{\gamma}'} \hat{\gamma}'.
\end{eqnarray*}
Hence, $\hat{\gamma}$ is a geodesic on
$\mathcal{O}(n-1,\frac{n(n-1)}{2})$.
\end{proof}

Now, analogous to Eq. \ref{eq7}, we can define the FM estimator on
$SO(n)$ where the geodesic, $\Gamma_{M_k}^{X_{k+1}}(\omega_{k+1})$ $=
\text{Exp}_{M_k}\bigg(\omega_{k+1}\text{Exp}^{-1}_{M_k}(X_{k+1})\bigg)$. 
Note that, on $SO(n)$,
$\text{Exp}_{M_k}\bigg(\omega_{k+1}\text{Exp}^{-1}_{M_k}(X_{k+1})\bigg)$
$=M_{k} \exp(\omega_{k+1} \log(M_{k}^{-1} X_{k+1}))$.

\begin{proposition}
\label{prop16}
$F_{*} \text{Exp}^{-1}_{M_k}(X_{k+1}) = \text{Exp}^{-1}_{F(M_k)}(F(X_{k+1}))$
\end{proposition}
\begin{proof}
Let $\gamma: [0,1] \rightarrow SO(n)$ be a geodesic from $M_k$ to
$X_{k+1}$. Then, $\text{Exp}^{-1}_{M_k}(X_{k+1}) =
\frac{d}{dt}(\gamma(t))\bigg|_{t=0}$. Using Proposition \ref{prop15},
$F\circ \gamma$ is a geodesic from $F(M_k)$ to $F(X_{k+1})$.
\begin{eqnarray*}
Log_{F(M_k)}F(X_{k+1}) &=& \frac{d}{dt}(F\circ \gamma(t))\bigg|_{t=0} \\
&=& F_{*} \frac{d}{dt}(\gamma(t))\bigg|_{t=0} \\
&=& F_{*} \text{Exp}^{-1}_{M_k}(X_{k+1} )
\end{eqnarray*}
\end{proof}

Let, $\bar{U} = \text{Exp}^{-1}_{F(M_k)}(F(X_{k+1}))$ and $\hat{U} =
\text{Exp}^{-1}_{M_k}(X_{k+1})$. Using Proposition \ref{prop16}, we
get,
\begin{eqnarray*}
g^{\tiny SO}(\hat{U}, \hat{U}) &=& F^{*}g^{\tiny \mathfrak{O}}
(\hat{U}, \hat{U}) \\ 
&=&g^{\mathfrak{O}} (F_{*} \hat{U}, F_{*} \hat{U}) \\ 
&=& g^{\mathfrak{O}} (\bar{U}, \bar{U})
\end{eqnarray*}

Thus, in order to show weak consistency of our proposed estimator on
$\{g_i\} \subset SO(n)$, it is sufficient to show the weak consistency
of our estimator on $\{F(g_i)\} \subset \mathfrak{O}(n-1,
\frac{n(n-1)}{2})$. A proof of the weak consistency of our proposed FM
estimator on hypersphere has been shown in
\cite{salehian2015efficient} (which can be trivially extended to
the product of hyperspheres). This proof of weak consistency on the
hypersphere in turn proves the weak consistency on $SO(n)$.
\end{proof}

Since we have now shown that our proposed FM estimator on
$\text{St}(p,n)$ is (weakly) consistent, we claim that, $\text{Var}(M_N)
\geq \text{Var}(\widehat{M}_N)$ as $N \rightarrow \infty$, where
$\widehat{M}_N$ is the MLE of $\bar{X}$ when
$\left\{X_i\right\}_{i=1}^N$ are i.i.d. samples from
$\mathcal{N}(\bar{X},\sigma)$ on $\text{St}(p,n)$. The following
porposition computes the Fisher information of $\bar{X}$ when samples
are drawn from $\mathcal{N}(\bar{X},\sigma)$ on $\text{St}(p, n)$.

\begin{proposition} 
\label{prop17}
Let $\mathbf{X}$ be a random variable which follows $\mathcal{N}(\bar{X},\sigma)$ on $\text{St}(p,n)$. Then, $I(\bar{X}) = 1/\sigma^2$
\end{proposition}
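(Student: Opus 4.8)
The quantity $I(\bar X)$ should be read as the Fisher information metric of the location parameter: for a unit tangent vector $u\in T_{\bar X}\mathrm{St}(p,n)$ and the geodesic $\bar X(t)=\mathrm{Exp}_{\bar X}(tu)$, it is $I(\bar X)(u,u)=\mathbb{E}\big[\big(\tfrac{d}{dt}\big|_{0}\log f(\mathbf X;\bar X(t),\sigma)\big)^2\big]$, and the claim is that this equals $1/\sigma^2$ for every such $u$. The first step of the plan is to remove the dependence on $\bar X$. For $g\in G$ with $\bar X=g.O$, the map $x\mapsto g.x$ is a measure- and distance-preserving bijection carrying $\mathcal N(O,\sigma)$ to $\mathcal N(\bar X,\sigma)$ and intertwining $\mathrm{Exp}_O^{-1}(\cdot)$ with $\mathrm{Exp}_{\bar X}^{-1}(\cdot)$ via $dg$; since the Fisher information metric is invariant under such an intertwining reparametrization, it suffices to evaluate $I(O)$, where $T_O\mathrm{St}(p,n)$ has the explicit description $\{\,[A;B]:A\in\mathfrak{so}(p),\ B\in\mathbf R^{(n-p)\times p}\,\}$.

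Next I would compute the score. Since $\log f(x;\bar X,\sigma)=-\log C(\sigma)-d^2(x,\bar X)/(2\sigma^2)$ by Eq.~\ref{hom:gauss}, and the Riemannian gradient of half the squared distance satisfies $\operatorname{grad}_{\bar X}\tfrac12 d^2(x,\bar X)=-\mathrm{Exp}_{\bar X}^{-1}(x)$ inside the regular geodesic ball (where the minimizing geodesic is unique, so $d^2(x,\cdot)$ is smooth), the first variation gives $\tfrac{d}{dt}\big|_{0}\log f(x;\mathrm{Exp}_O(tu),\sigma)=\tfrac{1}{\sigma^2}\langle \mathrm{Exp}_O^{-1}(x),u\rangle_O$. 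Hence $I(O)(u,u)=\tfrac{1}{\sigma^4}\,\mathbb{E}\big[\langle \mathrm{Exp}_O^{-1}(\mathbf X),u\rangle_O^{2}\big]$, so the whole proposition reduces to the single identity $\mathbb{E}\big[\langle V,u\rangle_O^{2}\big]=\sigma^2$ for every unit $u$, where $V:=\mathrm{Exp}_O^{-1}(\mathbf X)\in T_O\mathrm{St}(p,n)$.

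To establish that identity I would use the symmetries of $\mathcal N(O,\sigma)$. The stabilizer $H=SO(n-p)$ acts on $\mathrm{St}(p,n)$ by isometries fixing $O$ and preserving $\omega^{\mathcal M}$, so the law of $V$ on $T_O\mathrm{St}(p,n)$ is invariant under $dh|_O$; consequently the symmetric form $u\mapsto\mathbb{E}[\langle V,u\rangle_O^{2}]$ is $H$-invariant, and by Schur's lemma applied to the two $H$-isotypic blocks ($\mathfrak{so}(p)$ trivially, $\mathbf R^{(n-p)\times p}$ as $p$ copies of the standard representation) it is a scalar $c$ on each block — the two scalars being forced to coincide by the isometry $s_{\mathcal O}$ of the Grassmannian fiber together with Proposition~\ref{prop8}, which realizes $V$ through $\mathrm{Exp}_{\mathcal O}^{-1}$ on $\mathrm{Gr}(p,n)$ where $\pm V$ are equidistributed. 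The constant $c$ is then pinned down from the sampling construction of Section~\ref{sampling1}: in the normal coordinate $v$ at $O$ the law of $V$ is the isotropic Gaussian $\mathcal N(0,\sigma^2 I)$ (truncated to the geodesic ball), so $c=\mathbb{E}[\langle v,u\rangle^2]=\sigma^2$, and therefore $I(\bar X)=c/\sigma^4=1/\sigma^2$.

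The main obstacle is precisely the last point: making rigorous that the law of $V$ in normal coordinates is the plain $\mathcal N(0,\sigma^2 I)$ rather than that Gaussian reweighted by the volume-form Jacobian $\theta(v)$ of $\mathrm{Exp}_O$ — i.e., checking that the curvature correction does not perturb the second moment that feeds the Cram\'er--Rao bound — and, relatedly, confirming that the two $H$-irreducible pieces of $T_O\mathrm{St}(p,n)$ genuinely receive the same scalar, which cannot come from representation theory alone and must be extracted from the explicit density. The homogeneity reduction and the score/gradient computation are routine; the isotropy argument and the identification $c=\sigma^2$ carry the real content.
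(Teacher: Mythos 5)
Your route is essentially the paper's: differentiate the log-likelihood to get the score $\mathrm{Exp}^{-1}_{\bar X}(\mathbf X)/\sigma^2$, reduce the Fisher information to a second moment of the Riemannian log map, and then identify that moment with $\sigma^2$. The only structural difference is that you evaluate the Fisher information \emph{form} on a unit direction, $I(\bar X)(u,u)=\sigma^{-4}\,\mathbb{E}\bigl[\langle V,u\rangle^2\bigr]$ with $V=\mathrm{Exp}^{-1}_{\bar X}(\mathbf X)$, whereas the paper takes the traced version $I(\bar X)=\sigma^{-4}\,\mathbb{E}\bigl[\langle V,V\rangle\bigr]=\sigma^{-4}\,\mathbb{E}\bigl[d^2(\mathbf X,\bar X)\bigr]$. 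Your homogeneity reduction and the Schur/isotropy argument are extra care the paper does not supply, but they do not change the shape of the proof.

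The obstacle you single out at the end is exactly where the paper's proof is thinnest: it closes by declaring $\mathrm{Var}(\mathbf X)=\mathbb{E}\bigl[d^2(\mathbf X,\bar X)\bigr]=\sigma^2$ ``from the definition of the Gaussian distribution,'' with no computation. Your suspicion is well founded. The pushforward of $\mathcal N(\bar X,\sigma)$ under $\mathrm{Exp}^{-1}_{\bar X}$ has density proportional to $e^{-\|v\|^2/2\sigma^2}\,\theta(v)$, where $\theta$ is the volume density of the exponential map, so both $\mathbb{E}[d^2]$ and $\mathbb{E}[\langle V,u\rangle^2]$ carry curvature corrections; and even in the flat limit $\theta\equiv 1$ one gets $\mathbb{E}[d^2]=m\sigma^2$ with $m=\dim\mathrm{St}(p,n)$, while only the per-direction moment $\mathbb{E}[\langle V,u\rangle^2]$ equals $\sigma^2$ --- so your directional normalization is in fact the one under which the claimed answer can be correct, and the paper's traced normalization is off by a dimension factor on top of the curvature issue. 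In short: your proposal reproduces the paper's argument with the single substantive step honestly left open, and that step is not closed (indeed it is asserted in a form that is false even in the Euclidean model) in the paper's own proof.
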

\begin{proof}
The likelihood of $\bar{X}$  is given by 
\begin{align}
L(\bar{X};\sigma,\mathbf{X}=X)= \frac{1}{C(\sigma)} \exp\left(\ \frac{-d^2(X,\bar{X})}{2\sigma^2} \right)
\end{align}
Then, $I(\bar{X}) = E_\mathbf{X}\left[{\langle \frac{\partial
      l}{\partial \bar{X}}, \frac{\partial l}{\partial \bar{X}}
    \rangle}_{\bar{X}}\right]$, where $l(\bar{X};\sigma,X)$ is the log
likelihood. Now, $l(\bar{X};\sigma,X) =
\frac{\text{Exp}^{-1}_{\bar{X}} X}{\sigma^2}$, hence,
$E_\mathbf{X}\left[{\langle \frac{\partial l}{\partial \bar{X}},
    \frac{\partial l}{\partial \bar{X}}\rangle}\right] =
E_\mathbf{X}\left[{\langle \text{Exp}^{-1}_{\bar{X}} X,
    \text{Exp}^{-1}_{\bar{X}} X \rangle}_{\bar{X}}\right]$ $=
E_\mathbf{X}\left[d^2(X, \bar{X})\right]$.  Now, observe that,
$\text{Var}(\mathbf{X}) = E_\mathbf{X}\left[d^2(X, \bar{X})\right]$
(here, definition of variance of a manifold valued random variable is
as in \cite{pennec_jmiv06}), where from the definition of the Gaussian
distribution, $\text{Var}(\mathbf{X}) = \sigma^2$. Hence, $I(\bar{X})
= 1/\sigma^2$.
\end{proof}

As, $\text{Var}(\widehat{M}_N) = \sigma^2$ (as we have shown that
$\widehat{M}_N$ is the FM of the samples in proposition
\ref{prop10.5}) when the number of samples tends to infinity, and
$\sigma^2= 1/I(\bar{X})$ by proposition \ref{prop17}, we conclude that
MLE achieves the Cram\'er-Rao lower bound asymptotically (this observation
is in line with normal random vector). Furthermore MLE is unbiased,
and is asymptotically an efficient estiamtor. As, we have shown
consistency of our estimator, hence $\text{Var}(M_N)$ is lower bounded
by $\text{Var}(\widehat{M}_N)$ as $N \rightarrow \infty$. In other
words, asymptotically, $\text{Var}(M_N) \geq
\text{Var}(\widehat{M}_N)$ = $\sigma^2$.

\section{Experimental Results} \label{sec3}

In this section, we present experiments demonstrating the performance
of {\it StiFME} in comparison to the batch mode counterpart with
``warm start''(which uses the gradient descent on the sum of squared
geodesic distances cost function, henceforth termed {\it StFME}) on
synthetic and real datasets. By ``warm start" we mean that, when a new
data point is acquired as input, we initialize the FM to its computed
value prior to the arrival/acquisition of the new data point. All the
experimental results reported here were performed on a desktop with a
3.33 GHz Intel-i7 CPU with 24 GB RAM.

\subsection{Comparative performance of {\it StiFME} on Synthetic data}
\label{exp1}
We generated $1000$ i.i.d. samples drawn from a Normal distribution on
$\text{St}(p, n)$ with variance $0.25$ and expectation $\tilde{I}$, where
\begin{equation*}
\tilde{I}_{ij} = \begin{cases}
1 &\text{$1\leq i=j\leq p$}\\
0 &\text{o.w.}
\end{cases}
\end{equation*}
We input these i.i.d. samples to both {\it StiFME} and {\it StFME}. To
compare the performance, we compute the error, which is the distance
(on $\text{St}(p, n)$) between the computed FM and the known true FM
$\tilde{I}$. We also report the computation time for both these
cases. We performed this experiment $5000$ times and report the
average error and the average computation time. The comparison plot
for the average error is shown in Fig. \ref{fig1}, here $n=50$,
$p=10$. In order to achieve faster convergence of {\it StFME}, we used
the ``warm start'' technique, i.e., FM of $k$ samples is used to
initialize the FM computation for ${k+1}$ samples.  From this plot, it
is evident that the average accuracy error of {\it StiFME} is almost
same as that of {\it StFME}.

\begin{figure*}[!ht]
\centering
  \begin{minipage}[b]{0.45\textwidth}
    \includegraphics[width=\textwidth]{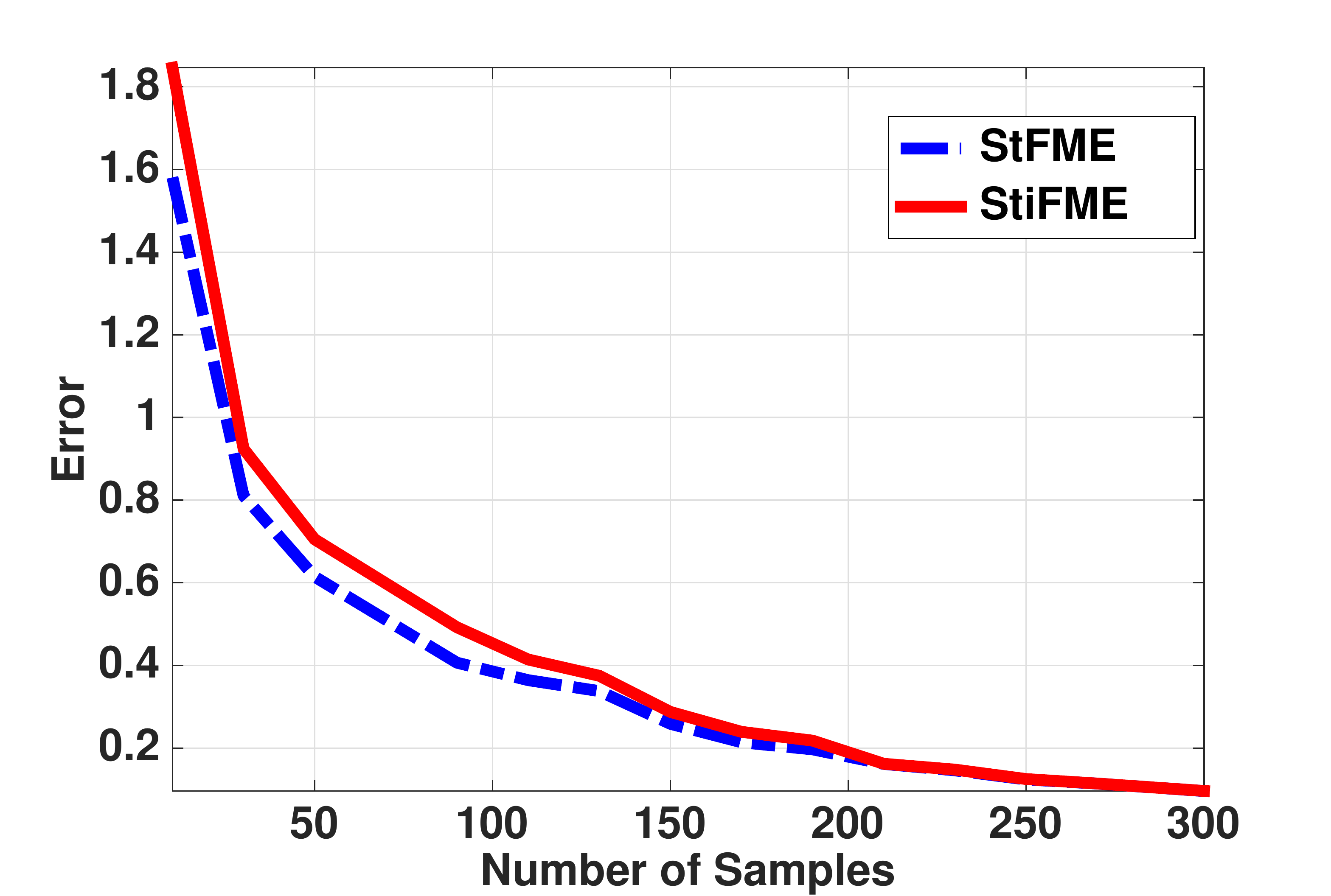}
    \subcaption{Average error.}
    \label{fig1}
  \end{minipage}
  \hfill
  \begin{minipage}[b]{0.45\textwidth}
    \includegraphics[width=\textwidth]{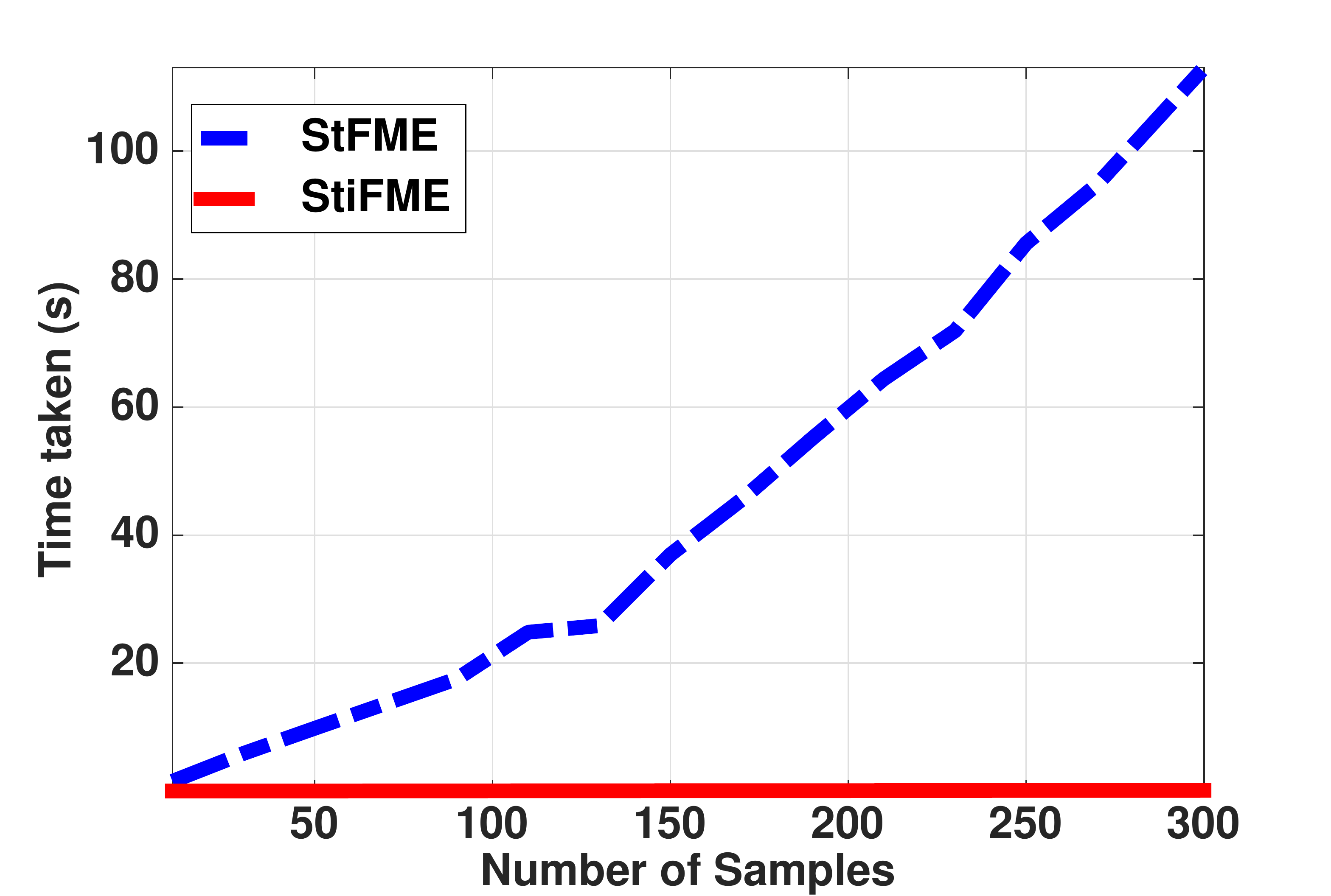}
    \subcaption{Average running time.}
    \label{fig3}
  \end{minipage}
   \begin{minipage}[b]{0.45\textwidth}
    \includegraphics[width=\textwidth]{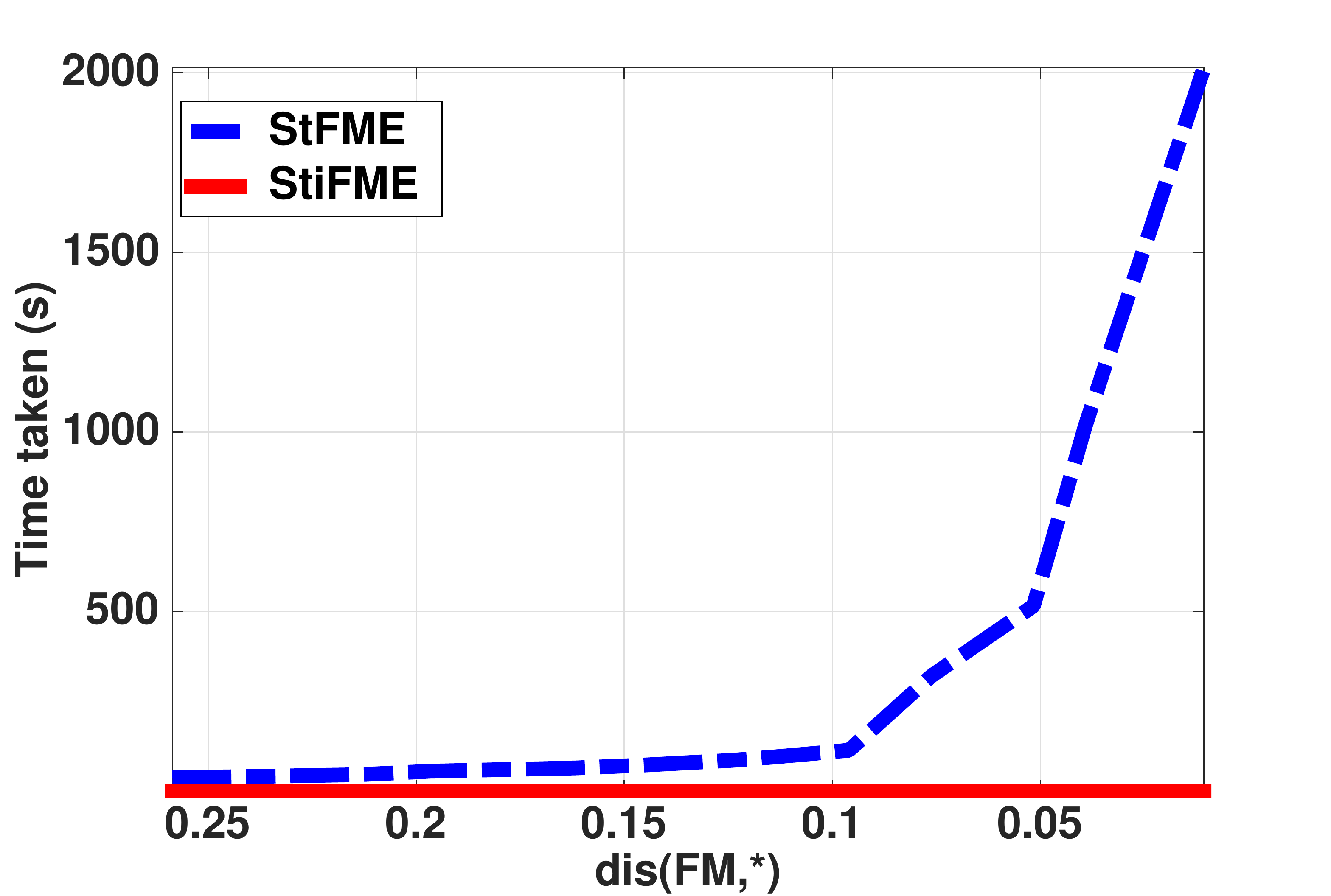}
    \subcaption{Time required to attain a specified accuracy.}
    \label{fig4}
  \end{minipage}
  \caption{Comparison between {\it StFME} and {\it StiFME}. }
  \vspace{-2em}
\end{figure*}

The computation time comparison between {\it StiFME} and {\it
  StFME} is shown in Fig. \ref{fig1}.  From this figure, we can see
that {\it StiFME} outperforms {\it StFME}. As the number of samples
increases, the computational efficiency of {\it StiFME} over {\it StFME}
becomes significantly large. We can also see that the time requirement
for {\it StiFME} is almost constant with respect to the the number of
samples, which makes {\it StiFME} computationally very efficient and
attractive for large number of data samples.

Another interesting question to ask is, how much computation time is
needed in order to estimate the FM with a given error tolerance?  We
answer this question through the plot in Figure \ref{fig4} and present
a comparison of the time required for {\it StiFME} and {\it StFME}
respectively to reach the specified error tolerance.
From Fig.\ref{fig4}, is is evident that the time required to reach the
specified error tolerance by {\it StiFME} is far less than that required
by {\it StFME}.

\subsection{Clustering action data from videos}
In this subsection, we applied our FM estimator to cluster the KTH video
action data \cite{schuldt2004recognizing}.
\begin{table}
\begin{center}
\begin{tabular}{|l|c|c|c|}
\hline
Method & Scenario & Precision(\%) & Time(s) \\
\hline\hline
StFME & d1 & $\mathbf{78.21}$ & $204.59$\\
StiFME & d1 & $77.33$ & $\mathbf{2.32}$ \\
\hline
StFME & d2 & $\mathbf{73.33}$ & $253.15$\\
StiFME & d2 & $70.67$ & $\mathbf{2.48}$ \\
\hline
StFME & d3 & $\mathbf{79.67}$ & $267.40$\\
StiFME & d3 & $77.91$ & $\mathbf{2.59}$ \\
\hline
StFME & d4 & $\mathbf{83.83}$ & $216.27$\\
StiFME & d4 & $90.73$ & $\mathbf{2.82}$ \\
\hline
\end{tabular}
\caption{Comparison results on the \emph{KTH} action recognition database}
\label{tab:tab2}
\end{center}
\end{table} 
This data contains $6$ actions performed by $25$ human subjects in $4$
scenarios (denoted by `d1', `d2', `d3' and `d4'). From each video, we
extracted a sequence of frames. Then, from each frame we computed the
{\it Histogram of Oriented Gradients} (HOG) \cite{dalal2005histograms}
features. We then used an auto-regressive moving average (ARMA) model
\cite{doretto2003dynamic} to model each activity. The equations for
the ARMA model are given below:
\begin{eqnarray*}
f(t) = Cz(t) + w(t) \\
z(t + 1) = Az(t) + v(t)
\end{eqnarray*}
where, $w$ and $v$ are zero-mean Gaussian noise, $f$ is the feature
vector, $z$ is the hidden state, $A$ is the transition matrix and $C$
is the measurement matrix. In \cite{doretto2003dynamic}, authors
proposed a closed form solution for $A$ and $C$ by stacking feature
vectors over time and performing a singular value decomposition on the
feature matrix. More specifically, let $T$ be the number of frames and
let $F$ be the matrix formed by stacking the feature vectors from each
frame. Let, $U\Sigma V^T$ be SVD of $F$, then, $A$ and $C$ can be
approximated as, $C = U$, $A = \Sigma V^TD_1V
\left(V^TD_2V\right)^{-1}\Sigma^{-1}$, where $D_1$ and $D_2$ are zero
matrices with identity in bottom-left and top-left submatrix
respectively. Clearly, $C$ lies on a Stiefel manifold, but in general
$A$ does not have any special structure. Hence, we identify each
activity with a product space of $U$, $\Sigma$ and $V$. Note that both
$U$ and $V$ lie on Steifel manifold (possibly of different dimensions)
and $\Sigma$ lies in the Euclidean space.

Here, we perform clustering of the actions by doing clustering on the
product manifold of $\text{St}(p, n) \times \text{St}(n, n) \times
\mathbf{R}^n$. The accuracy is reported in Table \ref{tab:tab2}. From
this table, we can see {\it StiFME} depicts significant gain in
computation time over {\it StFME} and is comparable in accuracy.

We would like to point out that in the real data experiment, one can
easily fit a half-normal distribution on
$\left\{d(X_i,\bar{X})\right\}$ by viewing the relation of our
definition of Gaussian distribution with the kernel of the half-normal
distribution on $\left\{d(X_i,\bar{X})\right\}$ with location
parameter $0$ and scale parameter $\sigma^2$. So, the goodness of fit
can be evaluated using the Chi-squared test where the null hypothesis
$H_0$ is that $\left\{d(X_i,\bar{X})\right\}$ are drawn from a
half-normal distribution.

In this experiment, we estimated the goodness of fit in fitting a
Gaussian to the set of samples, $\left\{U_i\right\}$ (samples collected from a given action), using the aforementioned procedure. We found that the
Chi-squared test does not reject the null hypothesis with a $5\%$
significance level, implying that, $\left\{U_i\right\}$ are indeed drawn
from a Gaussian distribution on $\text{St}(p,n)$. We also tried to fit
a Gaussian to the entire data, i.e., over all actions, and found that the entire data are not
drawn from a Gaussian distribution. This is not surprising, as the
entire dataset probably follow a mixture of Gaussians as each
individual action/ cluster follows a Gaussian distribution.
\subsection{Experiments on Vector-cardiogram dataset}

This data set \cite{downs_data} summarises vector-cardiograms of $98$
healthy children aged between $2$-$19$. Each child has two
vector-cardiograms, using the Frank and McFee system respectively. The
two vector-cardiograms are represented as two mutually orthogonal
orientations in $\mathbf{R}^3$, hence, each vector-cardiogram can be
mapped to a point on $\text{St}(2, 3)$.  We perform statistical
analysis via principal geodesic analysis (PGA)
\cite{fletcher2004principal} of the data depicted in figure
\ref{fig55} (at the top). One of the key steps in PGA is to find
the FM, which is depicted in the plot (in black). Further, we
reconstructed the data from the first two principal directions (which
accounts for $>90\%$ of the data variance) and the reconstructed
results are shown in the rightmost plot. The reconstruction error
is on the average $0.05$ per subject, which implies that the reconstruction is quite accurate.

\begin{figure}[!ht]
 	\begin{subfigure}[t]{\textwidth}
        	\centering
        \includegraphics[width=0.47\textwidth]{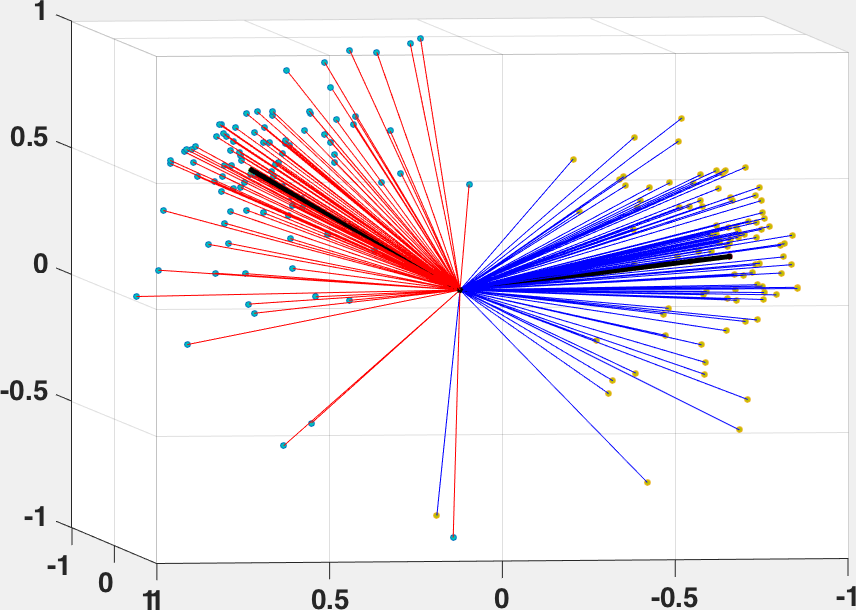}
         \includegraphics[width=0.485\textwidth]{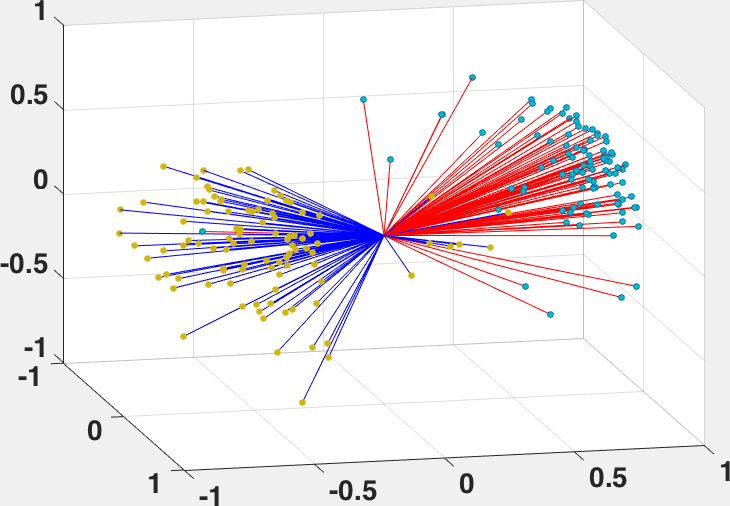}
          \end{subfigure}
   \caption{Averaging on Vector-cardiogram data. {\it Data with FM shown in black} (Left), {\it reconstructed data} (Right)}
\label{fig55}
\end{figure}

\subsection{Comparison with Stochastic Gradient Descent based FM Estimator} \label{exp5}
In this subsection, we present a comparison between {\it StiFME} and
the stochastic gradient descent based FM estimator in
\cite{bonnabel2013stochastic}.

There are two key differences between the algorithm in
\cite{bonnabel2013stochastic} and {\it StiFME}. As in any stochastic
gradient scheme, the next point, i.e., $z_t$ in $w_{t+1} = \exp_{w_t}
(-\gamma_tH(z_t, w_t))$ (Eq.2 in \cite{bonnabel2013stochastic}) is
chosen randomly from the given sample set. Hence, the stochastic
formulation needs several passes over the sample set and reports the
expected value over the passes as the estimated FM. In contrast, {\it
  StiFME} is a deterministic algorithm and hence does not need
multiple passes over the data. Moreover, our selection of this weight
is primarily in spirit the same as the weights in a recursive
arithmetic mean computation in Euclidean space. In contrast,
\cite{bonnabel2013stochastic} does not specify any scheme to choose
the proper step size $\gamma_t$ (Eq.2 of
\cite{bonnabel2013stochastic}). Note that, like in any gradient
descent, the algorithm in \cite{bonnabel2013stochastic} is very much
dependent on a proper step size selection. Step size selection in
gradient descent and its relatives is a hard problem and the most
widely used method (Armijo rule) is computationally expensive. We now
provide two experimental comparisons with algorithm in
\cite{bonnabel2013stochastic}. Consider a data set of $100$ samples
drawn from a Log-Normal distribution, with a small variance of $0.05$
on $\text{St}(10, 50)$. The distance between \{FM and {\it StiFME}\}
and \{FM and computed FM using the algorithm in
\cite{bonnabel2013stochastic}\} (assessed in one pass over the data)
are $0.00025$ and $0.009$ respectively. However,
\cite{bonnabel2013stochastic} requires $19$ passes over the data to
achieve the tolerance of $0.00025$ obtained by {\it StiFME}. For a
larger data variance of $0.29$ on $\text{St}(10, 50)$, the distance
between FM, {\it StiFME} and FM computed from
\cite{bonnabel2013stochastic} are $0.00039$ and $0.03$ (in one pass
over the data) respectively, which is a significant
difference. Furthermore, the method in \cite{bonnabel2013stochastic}
needs $58$ passes over the data to achieve the tolerance achieved by
{\it StiFME}. This clearly indicates better computational efficiency
of {\it StiFME} over the FM estimator in
\cite{bonnabel2013stochastic}.
\subsection{Time Complexity comparison}

The complexity of {\it StFME} is $\mathcal{O}$$(\iota N)$, $N$ is the
number of samples in the data and $\iota$ is the number of iterations
required for convergence. The number of iterations however depends on
the step size used, too small a step size causes very slow convergence
and too large a step size overshoots the FM. In contrast, the
complexity of {\it StiFME} is $\mathcal{O}(N)$ because it outputs the
estimated FM in a single pass through the data. On the other hand the
SGD algorithm proposed in \cite{bonnabel2013stochastic} takes
$\mathcal{O}$$(b\hat{\iota})$, where $b$ is the batch size and
$\hat{\iota}$ is the number of iterations to convergence. So, in
comparison, {\it StiFME} is much faster than the other two competing
algorithms.

\section{Conclusions}\label{conc}
In this paper, we defined a Gaussian distribution on a Riemannian
homogenous space and proved that the MLE of the location parameter of
this Gaussian distribution yields the FM of the samples drawn from the
distribution. Further, we presented a sampling algorithm to draw
samples from this Gaussian distribution on the Stiefel manifold (which
is a homogeneous space) and a novel recursive estimator, {\it StiFME},
for computing the FM of these samples. A proof of weak consistency of
{\it StiFME} was also presented. Further, we also showed that the MLE
of the location parameter of the Gaussian distribution on $St(p,n)$
asymptotically achieves the Cram\'er-Rao lower bound and hence is
efficient. The salient feature of {\it StiFME} is that it does not
require any optimization unlike the traditional methods that seek to
optimize the Fr\'{e}chet functional via gradient descent. This leads
to significant savings in computation time and makes it attractive for
online applications of FM computation for manifold-valued data, such
as clustering etc. We presented several experiments demonstrating the
superior performance of {\it StiFME} over gradient-descent based
competing FM-estimators on synthetic and real data sets.


\bibliographystyle{plain}
\bibliography{references_used,citations}

\begin{thebibliography}{10}

\bibitem{absil2004riemannian}
P-A Absil, Robert Mahony, and Rodolphe Sepulchre.
\newblock Riemannian geometry of grassmann manifolds with a view on algorithmic
  computation.
\newblock {\em Acta Applicandae Mathematicae}, 80(2):199--220, 2004.

\bibitem{afsari}
Bijan Afsari.
\newblock Riemannian lp center of mass: Existence, uniqueness, and convexity.
\newblock {\em Proceedings of the American Mathematical Society},
  139(2):655--673, 2011.

\bibitem{ando2004geometric}
Tsuyoshi Ando, Chi-Kwong Li, and Roy Mathias.
\newblock Geometric means.
\newblock {\em Linear algebra and its applications}, 385:305--334, 2004.

\bibitem{arnaudon2013riemannian}
Marc Arnaudon, Fr{\'e}d{\'e}ric Barbaresco, and Le~Yang.
\newblock Riemannian medians and means with applications to radar signal
  processing.
\newblock {\em IEEE Journal of Selected Topics in Signal Processing},
  7(4):595--604, 2013.

\bibitem{bhatia2013matrix}
Rajendra Bhatia.
\newblock {\em Matrix analysis}, volume 169.
\newblock Springer Science \& Business Media, 2013.

\bibitem{bhattacharya2008statistics}
Abhishek Bhattacharya and Rabi Bhattacharya.
\newblock Statistics on riemannian manifolds: asymptotic distribution and
  curvature.
\newblock {\em Proceedings of the American Mathematical Society},
  136(8):2959--2967, 2008.

\bibitem{bonnabel2013stochastic}
Silvere Bonnabel.
\newblock Stochastic gradient descent on riemannian manifolds.
\newblock {\em Automatic Control, IEEE Transactions on}, 58(9):2217--2229,
  2013.

\bibitem{cetingul2009intrinsic}
Hasan~Ertan Cetingul and Ren{\'e} Vidal.
\newblock Intrinsic mean shift for clustering on stiefel and grassmann
  manifolds.
\newblock In {\em Computer Vision and Pattern Recognition, 2009. CVPR 2009.
  IEEE Conference on}, pages 1896--1902. IEEE, 2009.

\bibitem{rudra2017isbi}
Rudrasis Chakraborty, Monami Banerjee, and Baba Vemuri.
\newblock Statistics on the space of trajectories for longitudinal data
  analysis.
\newblock {\em IEEE International Symposium on Biomedical Imaging (Accepted)},
  2017.

\bibitem{Chakraborty_2015_ICCV}
Rudrasis Chakraborty and Baba~C. Vemuri.
\newblock Recursive frechet mean computation on the grassmannian and its
  applications to computer vision.
\newblock In {\em The IEEE International Conference on Computer Vision (ICCV)},
  December 2015.

\bibitem{charfi2013bhattacharyya}
Malek Charfi, Zeineb Chebbi, Maher Moakher, and Baba~C Vemuri.
\newblock Bhattacharyya median of symmetric positive-definite matrices and
  application to the denoising of diffusion-tensor fields.
\newblock In {\em Biomedical Imaging (ISBI), 2013 IEEE 10th International
  Symposium on}, pages 1227--1230. IEEE, 2013.

\bibitem{cheegerebin}
Jeff Cheeger and DG~Ebin.
\newblock {\em Comparison theorems in Riemannian geometry}, volume 365.
\newblock American Mathematical Soc., 1975.

\bibitem{cheng2013novel}
Guang Cheng and Baba~C Vemuri.
\newblock A novel dynamic system in the space of spd matrices with applications
  to appearance tracking.
\newblock {\em SIAM journal on imaging sciences}, 6(1):592--615, 2013.

\bibitem{chikuse1991asymptotic}
Yasuko Chikuse.
\newblock Asymptotic expansions for distributions of the large sample matrix
  resultant and related statistics on the stiefel manifold.
\newblock {\em Journal of multivariate analysis}, 39(2):270--283, 1991.

\bibitem{cramer2016mathematical}
Harald Cram{\'e}r.
\newblock {\em Mathematical Methods of Statistics (PMS-9)}, volume~9.
\newblock Princeton university press, 2016.

\bibitem{dalal2005histograms}
Navneet Dalal and Bill Triggs.
\newblock Histograms of oriented gradients for human detection.
\newblock In {\em CVPR}, volume~1, pages 886--893, 2005.

\bibitem{doretto2003dynamic}
Gianfranco Doretto, Alessandro Chiuso, Ying~Nian Wu, and Stefano Soatto.
\newblock Dynamic textures.
\newblock {\em International Journal of Computer Vision}, 51(2):91--109, 2003.

\bibitem{downs_data}
T.~Downs, J.~Liebman, and W.~Mackay.
\newblock Statistical methods for vectorcardiogram orientations.
\newblock In {\em Vectorcardiography 2: Proc. XIth International Symp. on
  Vectorcardiography}, 1971.

\bibitem{fletcher2007riemannian}
P~Thomas Fletcher and Sarang Joshi.
\newblock Riemannian geometry for the statistical analysis of diffusion tensor
  data.
\newblock {\em Signal Processing}, 87(2):250--262, 2007.

\bibitem{fletcher2004principal}
P~Thomas Fletcher, Conglin Lu, Stephen~M Pizer, and Sarang Joshi.
\newblock Principal geodesic analysis for the study of nonlinear statistics of
  shape.
\newblock {\em IEEE TMI}, 23(8):995--1005, 2004.

\bibitem{fraikin2007optimization}
Catherine Fraikin, K~H{\"u}per, and P~Van Dooren.
\newblock Optimization over the stiefel manifold.
\newblock {\em PAMM}, 7(1):1062205--1062206, 2007.

\bibitem{frechet1948elements}
Maurice Fr{\'e}chet.
\newblock Les {\'e}l{\'e}ments al{\'e}atoires de nature quelconque dans un
  espace distanci{\'e}.
\newblock In {\em Annales de l'institut Henri Poincar{\'e}}, volume~10, pages
  215--310. Presses universitaires de France, 1948.

\bibitem{goodall1999projective}
Colin~R Goodall and Kanti~V Mardia.
\newblock Projective shape analysis.
\newblock {\em Journal of Computational and Graphical Statistics},
  8(2):143--168, 1999.

\bibitem{groisser2004newton}
David Groisser.
\newblock Newton's method, zeroes of vector fields, and the riemannian center
  of mass.
\newblock {\em Advances in Applied Mathematics}, 33(1):95--135, 2004.

\bibitem{Hartley_IJCV13}
Richard Hartley, Jochen Trumpf, Yuchao Dai, and Hongdong Li.
\newblock Rotation averaging.
\newblock {\em IJCV}, 103(3):267--305, 2013.

\bibitem{hauberg2014grassmann}
Soren Hauberg, Aasa Feragen, and Michael~J Black.
\newblock Grassmann averages for scalable robust pca.
\newblock In {\em Proceedings of the IEEE Conference on Computer Vision and
  Pattern Recognition}, pages 3810--3817, 2014.

\bibitem{helgason}
Sigurdur Helgason.
\newblock {\em Differential geometry, Lie groups, and symmetric spaces},
  volume~80.
\newblock Academic press, 1979.

\bibitem{hendriks1998mean}
Harrie Hendriks and Zinoviy Landsman.
\newblock Mean location and sample mean location on manifolds: asymptotics,
  tests, confidence regions.
\newblock {\em Journal of Multivariate Analysis}, 67(2):227--243, 1998.

\bibitem{ho2013recursive}
Jeffrey Ho, Guang Cheng, Hesamoddin Salehian, and Baba Vemuri.
\newblock Recursive {K}archer expectation estimators and geometric law of large
  numbers.
\newblock In {\em Proceedings of the Sixteenth International Conference on
  Artificial Intelligence and Statistics}, pages 325--332, 2013.

\bibitem{kaneko2013empirical}
Tetsuya Kaneko, Simone Fiori, and Toshihisa Tanaka.
\newblock Empirical arithmetic averaging over the compact stiefel manifold.
\newblock {\em Signal Processing, IEEE Transactions on}, 61(4):883--894, 2013.

\bibitem{karcher1977riemannian}
Hermann Karcher.
\newblock Riemannian center of mass and mollifier smoothing.
\newblock {\em Communications on pure and applied mathematics}, 30(5):509--541,
  1977.

\bibitem{kendall}
Wilfrid~S Kendall.
\newblock Probability, convexity, and harmonic maps with small image i:
  uniqueness and fine existence.
\newblock {\em Proceedings of the London Mathematical Society}, 3(2):371--406,
  1990.

\bibitem{lui2012advances}
Yui~Man Lui.
\newblock Advances in matrix manifolds for computer vision.
\newblock {\em Image and Vision Computing}, 30(6):380--388, 2012.

\bibitem{directional_data}
Kanti~V Mardia and Peter~E Jupp.
\newblock {\em Directional statistics}, volume 494.
\newblock John Wiley \& Sons, 2009.

\bibitem{moakher2005differential}
Maher Moakher.
\newblock A differential geometric approach to the geometric mean of symmetric
  positive-definite matrices.
\newblock {\em SIMAX}, 26(3):735--747, 2005.

\bibitem{moakher2006averaging}
Maher Moakher.
\newblock On the averaging of symmetric positive-definite tensors.
\newblock {\em Journal of Elasticity}, 82(3):273--296, 2006.

\bibitem{patrangenaru2003affine}
Vic Patrangenaru and Kanti~V Mardia.
\newblock Affine shape analysis and image analysis.
\newblock In {\em 22nd Leeds Annual Statistics Research Workshop}, 2003.

\bibitem{pennec_jmiv06}
Xavier Pennec.
\newblock Intrinsic statistics on riemannian manifolds: Basic tools for
  geometric measurements.
\newblock {\em JMIV}, 25(1):127--154, 2006.

\bibitem{pennec2006riemannian}
Xavier Pennec, Pierre Fillard, and Nicholas Ayache.
\newblock A riemannian framework for tensor computing.
\newblock {\em International Journal of Computer Vision}, 66(1):41--66, 2006.

\bibitem{pham2008robust}
Duc-Son Pham and Svetha Venkatesh.
\newblock Robust learning of discriminative projection for multicategory
  classification on the stiefel manifold.
\newblock In {\em Computer Vision and Pattern Recognition, 2008. CVPR 2008.
  IEEE Conference on}, pages 1--7. IEEE, 2008.

\bibitem{Rao1987}
C~Radakrishna Rao.
\newblock Differential metrics in probability spaces.
\newblock {\em Differential geometry in statistical inference}, 10:217--240,
  1987.

\bibitem{rao1992information}
C~Radhakrishna Rao.
\newblock Information and the accuracy attainable in the estimation of
  statistical parameters.
\newblock In {\em Breakthroughs in statistics}, pages 235--247. Springer, 1992.

\bibitem{said2015riemannian}
Salem Said, Lionel Bombrun, Yannick Berthoumieu, and Jonathan Manton.
\newblock Riemannian gaussian distributions on the space of symmetric positive
  definite matrices.
\newblock {\em arXiv preprint arXiv:1507.01760}, 2015.

\bibitem{saidvemuri}
Salem Said, Hatem Hajri, Lionel Bombrun, and Baba~C Vemuri.
\newblock Gaussian distributions on riemannian symmetric spaces: statistical
  learning with structured covariance matrices.
\newblock {\em arXiv preprint arXiv:1607.06929}, 2016.

\bibitem{salehian2015efficient}
Hesamoddin Salehian, Rudrasis Chakraborty, Edward Ofori, David Vaillancourt,
  and Baba~C Vemuri.
\newblock An efficient recursive estimator of the fr{\'e}chet mean on a
  hypersphere with applications to medical image analysis.
\newblock {\em Mathematical Foundations of Computational Anatomy}, 2015.

\bibitem{schuldt2004recognizing}
Christian Schuldt, Ivan Laptev, and Barbara Caputo.
\newblock Recognizing human actions: a local svm approach.
\newblock In {\em Pattern Recognition, 2004. ICPR 2004. Proceedings of the 17th
  International Conference on}, volume~3, pages 32--36, 2004.

\bibitem{Anuj_cvpr07}
Anuj Srivastava, Ian Jermyn, and Shantanu Joshi.
\newblock Riemannian analysis of probability density functions with
  applications in vision.
\newblock In {\em CVPR}, pages 1--8, 2007.

\bibitem{sturm2003probability}
Karl-Theodor Sturm.
\newblock Probability measures on metric spaces of nonpositive curvature.
\newblock {\em Contemporary mathematics}, 338:357--390, 2003.

\bibitem{Tuch_2003}
David~S Tuch, Timothy~G Reese, Mette~R Wiegell, and Van~J Wedeen.
\newblock Diffusion mri of complex neural architecture.
\newblock {\em Neuron}, 40(5):885--895, 2003.

\bibitem{turaga2008statistical}
Pavan Turaga, Ashok Veeraraghavan, and Rama Chellappa.
\newblock Statistical analysis on stiefel and grassmann manifolds with
  applications in computer vision.
\newblock In {\em Computer Vision and Pattern Recognition, 2008. CVPR 2008.
  IEEE Conference on}, pages 1--8. IEEE, 2008.

\bibitem{grassmanncurvature}
Yung-Chow Wong.
\newblock Sectional curvatures of grassmann manifolds.
\newblock {\em Proceedings of the National Academy of Sciences}, 60(1):75--79,
  1968.

\bibitem{zhang2013probabilistic}
Miaomiao Zhang and P~Thomas Fletcher.
\newblock Probabilistic principal geodesic analysis.
\newblock In {\em Advances in Neural Information Processing Systems}, pages
  1178--1186, 2013.

\bibitem{Ziller07examplesof}
Wolfgang Ziller.
\newblock Examples of riemannian manifolds with non-negative sectional
  curvature, 2007.

\bibitem{zimmermann2017matrix}
Ralf Zimmermann.
\newblock A matrix-algebraic algorithm for the riemannian logarithm on the
  stiefel manifold under the canonical metric.
\newblock {\em SIAM Journal on Matrix Analysis and Applications},
  38(2):322--342, 2017.

\end{thebibliography}

\end{document}